
\documentclass{article}

\usepackage{microtype}
\usepackage{graphicx}
\usepackage{subcaption}
\usepackage{booktabs} 

\usepackage{multicol}
\usepackage{algorithmic}
\usepackage{algorithm}
\usepackage{amsmath}
\usepackage{amssymb}
\usepackage{amsthm}
\usepackage{graphics}
\usepackage{epsfig}
\usepackage{multirow}
\usepackage{wrapfig}
\usepackage{mathtools}
\usepackage{makecell}
\usepackage{bbm}
\usepackage{booktabs}
\usepackage{graphicx}
\usepackage{kantlipsum}
\usepackage{multicol}
\usepackage{centernot}
\usepackage{xfrac}
\usepackage{afterpage}
\usepackage[dvipsnames]{xcolor}
\usepackage{tablefootnote}

\usepackage{pifont}
\newcommand{\cmark}{\ding{51}}%
\newcommand{\xmark}{\ding{55}}%

\definecolor{mydarkblue}{rgb}{0,0.08,0.45}

\usepackage{hyperref}



\usepackage[accepted]{icml2025}

\usepackage{amsmath}
\usepackage{amssymb}
\usepackage{mathtools}
\usepackage{amsthm}

\usepackage[capitalize,noabbrev]{cleveref}

\theoremstyle{plain}
\newtheorem{theorem}{Theorem}[section]
\newtheorem{proposition}[theorem]{Proposition}

\theoremstyle{definition}

\theoremstyle{remark}

\newcommand{\SE}{\mathrm{SE}}
\newcommand{\SO}{\mathrm{SO}}

\newcommand{\R}{\mathbb{R}}
\newcommand{\T}{\mathbb{T}}

\usepackage{xparse}

\ExplSyntaxOn
\newcommand{\plus}[1]{
  \fp_compare:nTF { #1 >= 10 }
    {({\color{blue} \textbf{+#1}})} 
    {({\color{blue} +#1})}          
}
\ExplSyntaxOff

\ExplSyntaxOn
\newcommand{\minus}[1]{
  \fp_compare:nTF { #1 >= 10 }
    {({\color{red} \textbf{-#1}})} 
    {({\color{red} -#1})}          
}
\ExplSyntaxOff

\usepackage[textsize=tiny]{todonotes}

\icmltitlerunning{Spherical Diffusion Policy}

\begin{document}

\twocolumn[
\icmltitle{$\SE(3)$-Equivariant Diffusion Policy in Spherical Fourier Space}




\begin{icmlauthorlist}
\icmlauthor{Xupeng Zhu}{int,uni}
\icmlauthor{Fan Wang}{comp}
\icmlauthor{Robin Walters}{uni}
\icmlauthor{Jane Shi}{comp}
\end{icmlauthorlist}

\icmlaffiliation{int}{Work was done as an intern at Amazon Robotics}
\icmlaffiliation{uni}{Khoury College of Computer Science, Boston, Massachusetts, USA}
\icmlaffiliation{comp}{Amazon Robotics, Boston, Massachusetts, USA}

\icmlcorrespondingauthor{Xupeng Zhu}{zhu.xup@northeastern.edu}

\icmlkeywords{Closed-loop policy, 3D equivariance, imitation learning}

\vskip 0.3in
]



\printAffiliationsAndNotice{}  

\begin{abstract}
Diffusion Policies are effective at learning closed-loop manipulation policies from human demonstrations but generalize poorly to novel arrangements of objects in 3D space, hurting real-world performance. To address this issue, we propose Spherical Diffusion Policy (SDP), an $\SE(3)$ equivariant diffusion policy that adapts trajectories according to 3D transformations of the scene. Such equivariance is achieved by embedding the states, actions, and the denoising process in spherical Fourier space. Additionally, we employ novel spherical FiLM layers to condition the action denoising process equivariantly on the scene embeddings. Lastly, we propose a spherical denoising temporal U-net that achieves spatiotemporal equivariance with computational efficiency. In the end, SDP is end-to-end $\SE(3)$ equivariant, allowing robust generalization across transformed 3D scenes. SDP demonstrates a large performance improvement over strong baselines in 20 simulation tasks and 5 physical robot tasks including single-arm and bi-manual embodiments. Code is available at \url{https://github.com/amazon-science/Spherical_Diffusion_Policy}.
\end{abstract}    
\section{Introduction}
\label{sec:intro}



Diffusion Policy \cite{chi2023DP} has emerged as an effective method for learning closed-loop policies from human demonstration. This success is based on the ability of Diffusion models \cite{ho2020denoising} to approximate multi-modal human demonstrations \cite{mandlekar2021matters}. A particularly challenging aspect of real-world robotic manipulation, which is often underrepresented in synthetic benchmarks, is that objects may be found in a wide range of 3D poses. Consider, for example, grasping a dish that is randomly placed in the sink, threading a nut onto a bolt with random orientation, or wiping the curved surface of a car. Diffusion Policy may struggle to attain robust 3D generalization without training on a large amount of costly human demonstrations to exhaust the possible 3D arrangements of the scene. 


\begin{figure}
    \centering
    \includegraphics[width=0.95\linewidth]{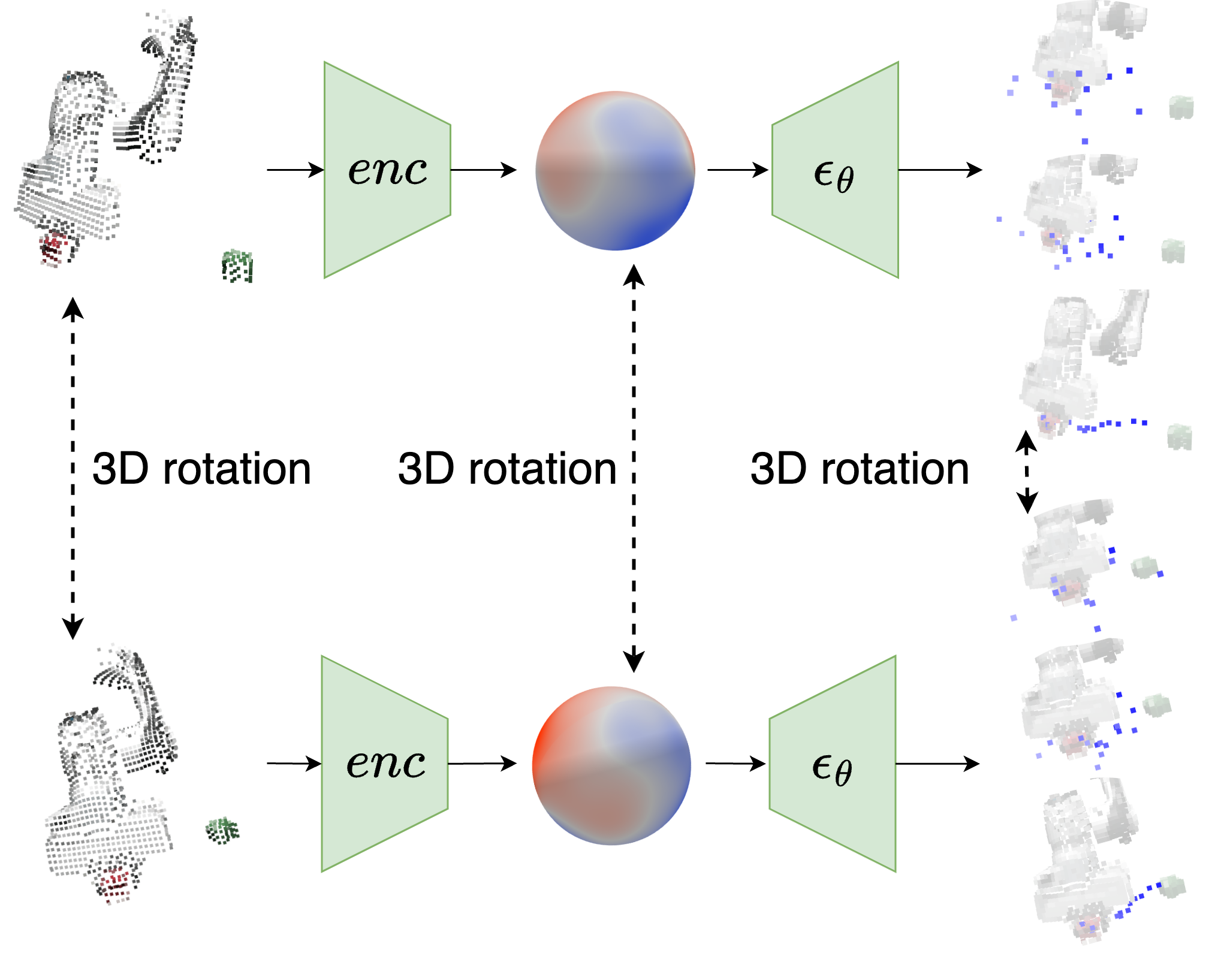}
    \caption{SDP enforces that the policy is $\SO(3)$ equivariant. Specifically, in the second row, an $\SO(3)$ rotation that is applied to the scene leads to an equivalent rotation on the latent spherical Fourier features in the neural networks $enc, \epsilon_\theta$, and on the generated trajectory (blue dots). Fourier features are visualized as spherical signal.}
    \label{fig:equ_in_policy}
\end{figure}

We propose Spherical Diffusion Policy (SDP), a Fourier space $\SE(3)$ equivariant method that automatically adapts to changes in the scene.
SDP improves on recent works in equivariant diffusion policy learning which are limited to $\SO(2)$-equivariance \cite{wangequivariant}, equivariant to only single-object transformations \cite{yangequibot, tie2024seed}, or computationally heavy \cite{tie2024seed}. In contrast, our method is light and $\SE(3)$ equivariant across multiple objects, allowing it to perform more complicated tasks with less engineering. SDP achieves translational invariance by formulating states and actions in the gripper frame \cite{chi2024universal}. Figure \ref{fig:equ_in_policy} illustrates the $\SO(3)$ equivariance of the proposed method. If the scene is transformed by a 3D rotation, then the denoised action trajectory will be rotated by the same rotation. Since this equivariance is embedded in the neural network, it does not rely on additional data to train and thus achieves high sample efficiency. The equivariance constraints lead to provable $\SE(3)$ generalization to transformed scenes.


The contributions of this work are: 
\begin{enumerate}
    \item a novel method, Spherical Diffusion Policy, which is equivariant to 3D rotations and invariant to 3D translations enabling generalization to unseen scenes, 
    \item a novel spherical FiLM layer for $\SO(3)$ equivariant conditioning,
    \item a novel spherical denoising temporal U-net for denoising trajectories with spatiotemporal-equivariance,
    \item theoretical validation that SDP is equivariant,
    \item empirical validation of SDP through extensive experiments that include 20 simulation and 5 physical tasks including single-arm and bi-manual embodiments.
\end{enumerate}
\section{Background}
\label{sec:background}

\textbf{Diffusion Policy}
is a closed-loop imitation learning method that learns a policy $\pi(s) = a$ that maps states to action trajectories from expert demonstrations. The states $s$ consist of camera observation $o$, e.g. images or voxels or point clouds, and the end-effector's 6DoF pose (3D translation and 3D rotation) $e_T, e_R$ and aperture $e_{grip}$. The action $a$ specifies the 6DoF pose $a_T, a_R$ and gripper aperture $a_{grip}$. The policy takes an input a history of $h$ states $S_t = [s_t, s_{t-1}, \ldots, s_{t-h+1}]$. The output is an action sequence of $r$ actions $A_t = [a_{t}, a_{t+1}, \ldots, a_{t+r-1}]$.

Diffusion Policy \cite{chi2023DP} leverages Diffusion Models \cite{ho2020denoising,songdenoising} to learn from multi-modal human demonstrations \cite{mandlekar2021matters}. Diffusion policy infers actions by sampling $A_t^K$ from a uniform Gaussian noise, then performing $K$ iterations of denoising, producing $A_t^K, A_t^{K-1},.., A_t^0$. The final iterate $A_t^0$ is the output action. The denoising process is defined by:
\begin{equation}
    A_t^{k-1} = \alpha\big(A_t^k - \gamma\epsilon_\theta(S_t, A_t^k, k) + z\big), z\sim \mathcal{N}(0, \sigma^2I) \label{equ:dp}
\end{equation}
where $\epsilon_{\theta}(S_t, A_t^k, k)$ is a learnable denoising function, parameterized by $\theta$, that estimates the noise $\epsilon^k$ based on the state $S_t$, the noisy action $A_t^k$, and the step $k$. The parameters $\alpha, \gamma, \sigma$ define the noise schedule and functions of the denoising step $k$.
Finally, the denoising function is trained to predict the noise added to the expert action:
\begin{equation}
    \mathcal{L} = \left\|\epsilon_{\theta}(S_t, A_t^0 + \epsilon, k) -\epsilon\right\|^2.
\end{equation}

\textbf{Equivariance} describes the property of a function which commutes with the transformations of a symmetry group $G$: $\rho_{\mathrm{out}}(g)f(x) = f\big(\rho_{\mathrm{in}}(g)x\big),$ for all $g \in G$. Here, the $\rho$s denote group representations, mapping each group element to an invertible matrix \cite{serre1977linear}. The 2D special orthogonal group $\SO(2)$ describes planar rotations and its subgroup $C_n$ discretizes $\SO(2)$ into $n$ rotations. Similarly, $\SO(3)$ describes 3D rotations. We denote the group of 3D translations $\T(3)$.  The Special Euclidian group $\SE(3)=\SO(3)\ltimes \T(3)$ includes both 3D rotations and translations.  For any group, the trivial representation $\rho_0$ assigns the identity matrix $\rho_0(g)=I$ to each group element.  This makes invariance a special case of equivariance where the output representation $\rho_{\mathrm{out}} = \rho_0$. For $\SO(3)$, there are higher-dimensional representations $\rho_1, \rho_2, \ldots$ that will be introduced later. Representations can be combined by direct sum $\rho(g) = \rho'(g)\oplus\rho''(g)$, where $\rho'(g)$ and $\rho''(g)$ are diagonal blocks in $\rho(g)$. 

\textbf{Equivariant Policy Learning} assumes the policy is equivariant $\pi(gS) = ga, g \in G$, where $G$ could be $\SO(2)$ group or $\SE(3)$ group. One way to achieve equivariance is by recognizing and modeling an equivariant function using equivariant neural networks. \cite{ryu2024diffusion} states that for Brownian Diffusion on the $\SE(3)$ manifold, if the target function $\pi(S) = a$ is equivariant, then the denoising function $\epsilon_\theta$ is also equivariant: $\epsilon_\theta(gS, gA, k) = g\epsilon_\theta(S, A, k)$. EquiDiff \cite{wangequivariant} extends this open-loop equivariance \cite{ryu2024diffusion} into closed-loop setups, but it is limited to $\SO(2)$ equivariance. For an additional introduction, see Appendix \ref{app:equ_diff}.

Another way to achieve equivariance is by canonicalizing the input $S$ and output $a$ of a neural network \cite{zeng2022robotic, wang2021equivariant, jia2023seil, chi2024universal}. For example, if $a$ is a 3D translation, then canonicalizing $S$ involves translating it inversely so that the action is at the origin: $S^{\text{can}} = S - a, a^{\text{can}} = a - a, a \in \T(3)$. Intuitively, canonicalization eliminates the transformation applied to the state and action by always evaluating the state in the canonicalized view. Refer to Appendix \ref{app:trans_can} for proof.

\textbf{Spherical Harmonics (SH)} are functions on the sphere $Y_l^m \colon S^2 \to \R$ which give an orthonormal basis for the function space $L^2(S^2,\R)$.
They are indexed by degree $l \in \mathbb{Z}_{\geq 0}$ and order $-l \leqslant m \leqslant l, m \in \mathbb{Z}$. A spherical function in spatial space can be transformed into the frequency domain by a spherical Fourier transform: $\mathcal{F} \colon f \mapsto \{ c_{l}^m \}$, where $c_{l}^m$ are Fourier coefficients. Inversely, the inverse spherical Fourier transform $\mathcal{F}^{-1}$ converts the Fourier coefficients to the spatial value: $f(u) = \sum_{l=0}^{\infty} \sum_{m} c_l^{m} Y_l^m(u)$. Spherical functions and SH are $\SO(3)$ steerable and thus suitable for $\SO(3)$-equivariant networks. Essentially, a rotation of $f$ in spatial space is equivalent to a rotation of $c_{l}^m$ in the frequency domain by the Wigner D-matrices $D^l_m$, which is orthogonal. That is, $f' = g \cdot f, g \in \SO(3)$ is equivalent to ${c_{l}^n}' = \sum_{m} D^l_{mn}(g) c_{l}^m$, where ${c_{l}^n}'$ are Fourier coefficients of $f'$. For example, degree $0$ ($\rho_0$) Fourier coefficients $c_{0}\in\R$ are scalars that are invariant to rotation, and degree $1$ ($\rho_1$) Fourier coefficients $c_{1}\in\R^3$ are 3D vectors with Wigner D-matrix given by a standard 3D rotation matrix. A Spherical Fourier signal up to degree $L$ has $(L+1)^2$ coefficients \cite{cohen2018spherical, bonev2023spherical}. SDP leverages this compact representation. Convolving two sets of Spherical Fourier signals \cite{cohen2018spherical, klee2023image} leads to a signal over $\SO(3)$, which has $\sum_l^L (2l+1)^2$ coefficients, as adopted in ET-SEED \cite{tie2024seed}.

\textbf{Equiformer \cite{liao2023equiformer}} and EquiformerV2 \cite{liao2024equiformerv2} are $\SE(3)$ equivariant graph neural networks (GNN) \cite{passaro2023reducing}. In contrast to conventional GNNs that treat each node in the graph as a scalar, Equiformer attaches spherical features to each node. These features are compactly approximated by truncated Fourier coefficients, up to degree $l \leqslant L$. Messages are aggregated from neighbor nodes in the graph through the edges by equivariant graph attention. This is followed by an equivariant spherical linear and activation layer. The spherical linear layer treats degree $l$ Fourier coefficients as high dimensional vectors to perform a linear mapping in each degree separately. The spherical activation layer \cite{geiger2022e3nn} performs inverse Fourier transform, then performs conventional activation point-wise on the sphere, and lastly converts the activation back to Fourier coefficients.

\section{Related Works}
\label{sec:related_works}

\paragraph{Closed-loop Robot Policy Imitation Learning} learns robot skills from human demonstrations through machine learning. Though it is a straightforward and general framework, facing multiple challenges. One challenge is the error compounding effect where the action prediction error causes future states to diverge from the training states and further exacerbate the next action prediction \cite{ke2021grasping}. To combat this, action chunking \cite{lai2022action, mandlekar2021matters, chi2023DP, zhao2023learning} proposes predicting and executing a trajectory of actions instead of one step of action. Another challenge is to learn from multi-modal human demonstrations. Multiple methods are proposed to fit a multi-modal policy, including Gaussian Mixture Model \cite{mandlekar2021matters, zhu2022viola}, Variational Auto Encoder \cite{zhao2023learning, Mousavian_2019_ICCV}, Energy-Based Models (Implicit Models) \cite{florence2022implicit}, and Diffusion Models \cite{janner2022planning, pearceimitating, chi2023DP}. Based on \cite{chi2023DP}, this work leverages additional inductive bias -- equivariance, to achieve significantly better performance.


\paragraph{Equivariance on Robot Learning}
Robotic policies operate in the 3D world, sharing rich symmetries. \cite{zhu2022grasp, huang2022edge, zhu2023grasp, hu2024orbitgrasp} investigated equivariance in the grasp learning. \cite{wang2021equivariant, huang2024leveraging, simeonov2022neural, zhaointegrating, ryu2024diffusion, huangfourier, gao2024riemann, zhu2025equact} developed equivariant open-loop policies. \cite{van2020mdp, wangmathrm, wangrobot, jia2023seil, continualrl, wang2024general, wangequivariant, yang2024equivact, yangequibot} verified effectiveness of equivariance in closed-loop agent. Among these works, \cite{zhu2022grasp, zhaointegrating, jia2023seil, continualrl, huang2024leveraging, wangequivariant,zhu2025coarse} utilize discrete equivariance that suffers from discretization error. On the other hand, \cite{ryu2024diffusion, huangfourier, gao2024riemann, hu2024orbitgrasp, zhu2025equact} leverages continuous equivariance but is limited to open-loop settings. Moreover, EquiBot \cite{yangequibot} are limited to degree $l=1$ representation that suppresses rich information, and ET-SEED\cite{tie2024seed} uses heavy $\SO(3)$ irreducible representation that needs two-stage inference to alleviate computation burden. Furthermore, \cite{yangequibot, yang2024equivact, tie2024seed} requires a segmentation pipeline engineered for each task to exclude everything but one object in the workspace. 
In contrast, our work is the first to leverage continuous and compact spherical Fourier features to achieve a $\SE(3)$ equivariant, end-to-end, and computationally efficient closed-loop policy. 

\paragraph{Diffusion Models and Equivariant Diffusion Models}

Diffusion Models \cite{sohl2015deep, ho2020denoising, songdenoising} are probabilistic generative models that demonstrated a strong capability modeling multi-modal distribution. Such capability is achieved by iteratively removing noise from an initial sample randomly drawn from an underlying distribution. Equivariance is introduced to diffusion models in \cite{xugeodiff, hoogeboom2022equivariant,yim2023se} in the context of molecule generation. Diffusion Models are applied to robotics in open-loop settings \cite{ke20243d, ryu2024diffusion, jiang2023se, urain2023se, huang2024imagination} and closed-loop settings \cite{janner2022planning, pearceimitating, chi2023DP, chi2024universal, ze20243d, wang2024rise, liu2024rdt1bdiffusionfoundationmodel, brehmer2024edgi}. The most relevant works on equivariant Diffusion Policy include \cite{wangequivariant, zhao2025hierarchical,yangequibot, tie2024seed}, where \cite{wang2021equivariant,zhao2025hierarchical,hu20253d} is limited to discretized $\SO(2)$ equivariance. \cite{yangequibot, tie2024seed} is $\SO(3)$ equivariant thus requiring engineering effort to segment everything but the target object, even though, these methods are designed to handle a single object in the scene. Moreover, \cite{tie2024seed} is based on heavy $\SO(3)$ irreducible representations and needs 2 stage diffusion process. In contrast, our method is $\SE(3)$ equivariant, based on compact yet expressive spherical representations that can end-to-end learning without task-specific engineering effort and generalize to multi-object tasks.

\begin{figure*}[h]
    \centering
    \includegraphics[width=0.95\textwidth]{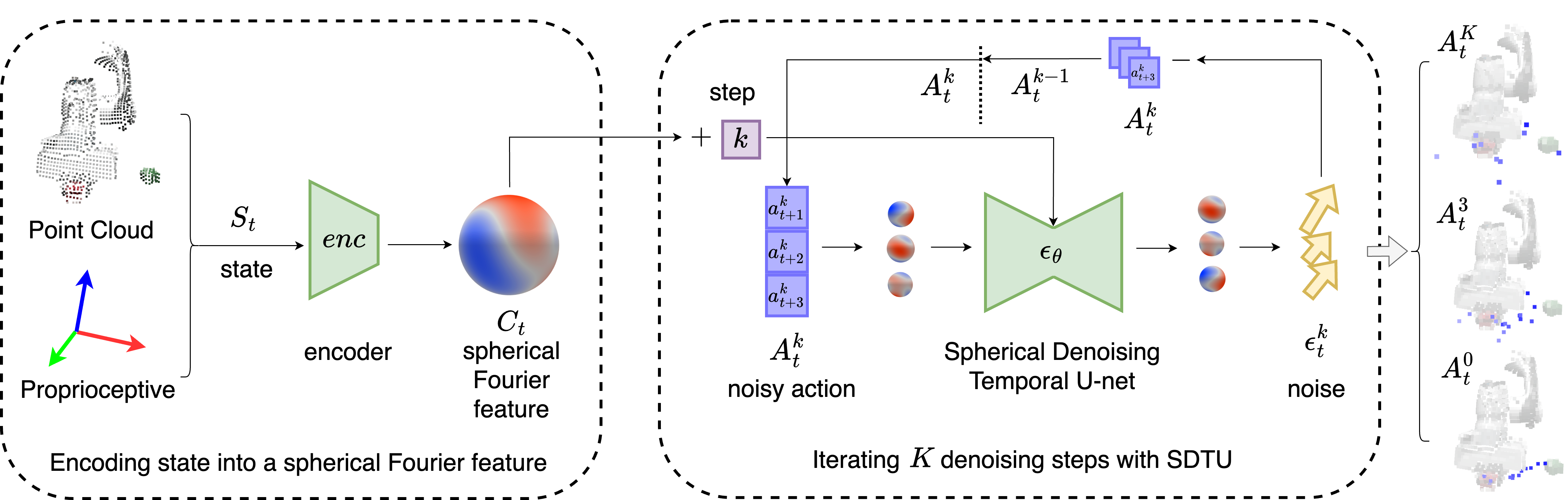}
    \caption{Method overview. During inference, SDP first embeds state $S_t$ into a spherical scene feature $C_t$ by the encoder $enc$. Then, SDTU $\epsilon_\theta$ estimates the noise $\epsilon$ based on the noisy actions $A^k_t$, step $k$, and the scene feature $C_t$. Later, the noise is subtracted from the noisy actions, generating cleaner actions $A^{k-1}_t$. This denoising process is performed for $K$ iterations, generating a clean trajectory $A_t^0$.}
    \label{fig:sdp_overview}
\end{figure*}

\section{Method}
\label{sec:method}

\subsection{Method Overview}

The Spherical Diffusion Policy model maps observations to actions $\pi(S) = A$. We assume the optimal policy is $\SE(3)$ equivariant and enforce this assumption in the model. Specifically, we enforce rotation equivariance $\pi(gS) = gA, g\in \SO(3)$, and translation invariance $\pi(tS) = A, t\in\T(3)$. The model is thus $\SE(3)$-equivariant where $\T(3)$ acts trivially on the actions.   

The rotational equivariance of $\pi$ is enforced by an equivariant denoising function $\epsilon_\theta$, as proven in \cite{ryu2024diffusion,wangequivariant}.
Specifically, we use an equivariant conditional denoising function $\epsilon_{\theta}(S, A +\epsilon^k, k)$ to estimate the noise for a noisy action $A +\epsilon^k$, the step $k$, and state $S$. We model $\epsilon_\theta$ using three components 
as shown in Figure \ref{fig:sdp_overview}: i) the spherical encoder embeds the state into a multichannel spherical scene feature $enc(S) = C$, and then ii) a spherical denoising temporal Unet (SDTU) estimates the noise from the noisy action and step, conditioned on the scene feature $\epsilon_\theta(C, A^k+\epsilon^k, k)$ using iii) spherical FiLM (SFiLM) layers to achieve this equivariant conditioning. Since these three components are equivariant, the denoising function is equivariant by composition.

Translational invariance of $\pi$ is achieved using a relative action formulation \cite{chi2024universal}, which canonicalizes the state-action \cite{zeng2022robotic} with respect to translations by centering the observation on the gripper and defining action positions relative to the gripper:
\begin{align*}
    S^{can,i} &= (O-e_T^i, e_T^i-e_T^i, e_R^i, e_{grip}^i) \\
    A^{can,i} &= (A_T^i - e_T^i, A_R^i, A_{grip}^i).
\end{align*}
See Appendix \ref{app:trans_can} for proof. For the single-arm setting, $i\in\{0\}$ denotes the gripper. Additionally, we propose bi-manual relative action representation. In this case, $i\in\{0,1\}$ and we canonicalize the state and action to the left $i=0$ and the right $i=1$ gripper's position.

\subsection{Representing State and Action by Spherical Signal}
\label{sec:representing}

In this section, we propose a spherical representation of the state and action for the policy.
There are several advantages of using spherical Fourier features as latent features. First, the truncated spherical Fourier coefficients provide a compact approximation of spherical features and are compatible with $\SO(3)$ rotations, rather than computationally heavy $\SO(3)$ irreps used in ET-SEED \cite{tie2024seed}. Furthermore, higher degree coefficients can represent finer details than EquiBot\cite{yangequibot} which adopted Vector Neuron \cite{deng2021vector} that only supports up to 3D vectors (analogous to type-$l=1$), suppressing rich higher degree information in the latent features. For example, vector representations cannot capture spherical distributions with two distinct modes. Lastly, spherical features support equivariance to continuous group $\SO(3)$ (continuous rotation), in contrast to discretized group $C_8$ (discretized rotation) in EquiDiff \cite{wangequivariant} which suffers from discretization error.

The end-effector state $e$, the action $a_t$, and the noise $\epsilon$ have the same geometric structure consisting of a 3D position, 3D rotation, and 1D gripper aperture information. We decompose the end-effector data as a 3D position vector, a $3\times3$ rotation matrix, and a 1D scalar. The rotation matrix can be viewed as 3 column vectors. We represent the position vector by a degree 1 vector, the rotation matrix by 3 degree 1 vectors, and the aperture as a scalar in the trivial representation. That is, $e, a_t, \epsilon \in \rho_{ee} = \rho_1^4\oplus\rho_0$. Intuitively, the position and rotation matrix are rotated by the rotation matrix corresponding to the rotation of the state, while the gripper aperture stays unchanged.

\begin{figure*}[ht!]
    \centering
    \includegraphics[width=\textwidth]{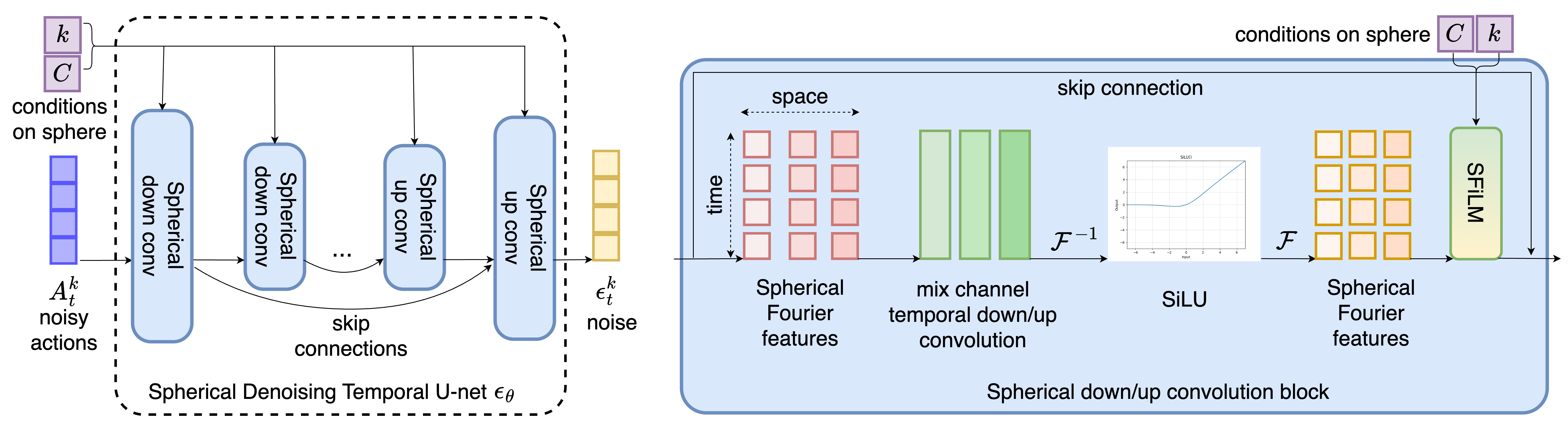}
    \caption{Spherical denoising temporal U-net (SDTU). Left: The SDTU $\epsilon_\theta$ estimates the noise $\epsilon$, based on the noisy actions $A^k_t$, denoising step index $k$, and the encoded scene $C$. The SDTU has a U-net architecture, with 4 spherical down or up convolution blocks. Right: details of a spherical down or up convolution block.}
    \label{fig:temporal_unet}
\end{figure*}

We adopt point clouds as an observation $o$ and treat color information as degree 0 spherical coefficients (same as \cite{ryu2024diffusion}), as the color is invariant to the point cloud rotation. The point cloud is encoded to a latent vector by a 5-layer ResNet \cite{he2016deep} encoder $enc(\cdot)$. The encoder is implemented by EquiformerV2 \cite{liao2024equiformerv2} which extracts a high-degree spherical signal from the point cloud, see Appendix \ref{app:encoder} for details. The robot state $e$ is concatenated to the output of the encoder yielding $C$.

\subsection{Spherical Denoising Temporal U-net}

The spherical denoising temporal U-net $\epsilon_\theta$ infers the noise from the noisy action $A^k+\epsilon^k$, denoising step index $k$, and state embedding $C$ as $\epsilon_\theta(C, A^k+\epsilon^k, k)$. The vector $C$ encodes the state in spherical Fourier space up to degree $L$. The input $A^k+\epsilon^k$ and the output are spherical signals in $\rho_{ee}$, as introduced in Section \ref{sec:representing}. The denoising step index is encoded using sinusoidal embeddings, treated as degree 0 features.

The SDTU is a 1D U-net with spherical Fourier features that are spatiotemporal equivariant. The temporal equivariance is achieved using 1D convolution along the time dimension $t$, as proposed in Diffuser \cite{janner2022planning}. 
We incorporate an additional spherical Fourier dimension in the latent features to achieve spatial equivariance ($\SO(3)$ equivariance).
This equivariance is enforced by mixing channel temporal convolution on each degree $l$ of the spherical Fourier coefficient:
\begin{equation}
    h_{l,m,t}^{o} = \sum_{j=0}^r\sum_{i\in in} h_{l,m,j}^{i} w_{l,j-t}^{i,o}, \label{equ:mctc}
\end{equation}
where $i, o$ indexing the input and the output channels respectively, $l,m$ is the degree and order of the spherical Fourier coefficient $h$, $in$ denotes all input channels. Subscript $j$ indexes the time in the prediction trajectories.

\begin{proposition}
\label{pro:mctc}
    The mixing channel temporal convolution in Equation. \ref{equ:mctc} is $\SO(3)$ equivariant:
\begin{equation}
    D^l_{mn}(g) h_{l,m,t}^{o} = \sum_{j\in T}\sum_{i\in in} D^l_{mn}(g) h_{l,m,j}^{i} w_{l,j-t}^{i,o}.
\label{equ:equ_mctc}
\end{equation}
\end{proposition}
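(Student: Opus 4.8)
The plan is to verify Equation \ref{equ:equ_mctc} by a direct substitution argument, exploiting the fact that the Wigner D-matrix acts only on the order index $m$ (and the degree $l$) and is entirely inert with respect to the channel indices $i,o$ and the temporal indices $j,t$. Concretely, I would start from the left-hand side $D^l_{mn}(g)\,h^o_{l,m,t}$, substitute the defining relation Equation \ref{equ:mctc} for $h^o_{l,m,t}$, and then observe that $D^l_{mn}(g)$ — being a constant (with respect to $i,j,t$) linear factor in the order index — commutes past the sums $\sum_{j}\sum_{i\in in}$ and past the scalar weights $w^{i,o}_{l,j-t}$, which carry no order index at all. This yields exactly the right-hand side. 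So the proof is essentially one line of ``pull the constant through the sum.''

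The slightly more substantive point, which I would make explicit first, is \emph{why} this string of equalities actually establishes $\SO(3)$-equivariance in the sense of the definition $\rho_{\mathrm{out}}(g) f(x) = f(\rho_{\mathrm{in}}(g)x)$. Here the relevant input and output representations are both built from copies of the degree-$l$ Wigner representation $D^l$ acting on each Fourier block, with the channel and time axes transforming trivially; a rotation $g$ applied to the input signal sends $h^i_{l,m,j} \mapsto \sum_{m'} D^l_{mm'}(g) h^i_{l,m',j}$. I would therefore show that feeding the rotated input into the convolution produces $\sum_j \sum_i \big(\sum_{m'} D^l_{mm'}(g) h^i_{l,m',j}\big) w^{i,o}_{l,j-t}$, and then interchange the order of summation to pull $D^l_{mm'}(g)$ to the front, recognizing the inner piece $\sum_j\sum_i h^i_{l,m',j} w^{i,o}_{l,j-t}$ as $h^o_{l,m',t}$. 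This is precisely $\sum_{m'} D^l_{mm'}(g)\, h^o_{l,m',t} = \rho_{\mathrm{out}}(g)\big(\epsilon_\theta\big)^o_{l,m,t}$, which is the equivariance claim. (The statement as written in Equation \ref{equ:equ_mctc} is a slightly condensed form of this, suppressing the explicit contraction over the repeated order index, but the content is the same.)

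There is essentially no obstacle here — the result is true because the operation is ``block-diagonal in degree, acts linearly within each block, and the Wigner matrix touches only the order index,'' so it is a textbook instance of an equivariant linear layer (a degree-wise linear map in the sense of the spherical linear layer recalled in the background). The only things I would be careful about are: (i) making the summation ranges consistent — the statement uses $\sum_{j=0}^{r}$ in Equation \ref{equ:mctc} and $\sum_{j\in T}$ in Equation \ref{equ:equ_mctc}, and I would fix a single index set $T$ (the time window of the trajectory, with the appropriate boundary/padding convention) and use it uniformly; (ii) noting that the weights $w^{i,o}_{l,j-t}$ are allowed to depend on $l$ but \emph{not} on $m$, which is exactly the condition that keeps the map within the space of $\SO(3)$-equivariant maps — a per-$m$ weight would break equivariance, so this is worth a sentence. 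With those conventions pinned down, the proof is the short interchange-of-sums computation sketched above.
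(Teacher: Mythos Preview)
Your proposal is correct and matches the paper's approach: the paper's proof simply starts from the right-hand side of Equation~\ref{equ:equ_mctc}, pulls $D^l_{mn}(g)$ outside the double sum (citing Schur's lemma for the justification), and recognizes the remaining sum as $h^o_{l,m,t}$. Your version is more explicit about the equivariance interpretation and the role of the $m$-independence of the weights, but the core computation is the same one-line interchange you describe.
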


For proof see Appendix \ref{prof:mctc}, the proof essentially follows Schur's lemma \cite{schur1905neue}, which states that any linear combination of $\SO(3)$ Fourier features is equivariant. This convolution is followed by spherical activation \cite{cohen2018spherical, geiger2022e3nn} for expressiveness. Stride and transposed convolution are used for down- and up- sampling in the U-net, as in Diffuser \cite{janner2022planning}. Spherical FiLM layers are adopted, allowing for equivariant conditioning, and are described in the next section. Figure \ref{fig:temporal_unet} summarizes the SDTU.


\subsection{Spherical FiLM Conditioning Layer}

We propose equivariant spherical FiLM (SFiLM) layers to extend the Feature-wise Linear Modulation (FiLM) layer \cite{perez2018film} used by Diffuser \cite{janner2022planning} into the spherical Fourier domain. The condition on sphere $C$ is projected into a scaling condition $\gamma$ and an offset condition $\beta$ by equivariant linear layers \cite{geiger2022e3nn}: $\gamma = \Gamma(C), \beta = \text{B} (C)$. Then, SFiLM conditions each degree $l$ separately. Specifically, SFiLM treats $\gamma_l, \beta_l$ as $2l+1$ dimensional vectors, to modulate the hidden feature $h_l$, by projecting $\gamma_l$ onto $h_l$ as a scaling condition and adding $\beta_l$ as an offset condition:
\begin{equation}
    \text{SFiLM}(h_l | \gamma_l, \beta_l) = \gamma_l^\text{T}h_l\frac{h_l}{||h_l||} + \beta_l.
    \label{equ:SFiLM}
\end{equation}
SFiLM supports high degree Fourier coefficients for expressiveness, which differs from EquiBot \cite{yangequibot} that only supports degree $1$ which drops rich information in the latent features.

\begin{proposition}
\label{pro:sfilm}
    The SFiLM layer in Equation. \ref{equ:SFiLM} is $\SO(3)$ equivariant:
\begin{align}
\label{equ:equ_sfilm}
    D(g) \cdot \text{SFiLM}&(h_l | \gamma_l, \beta_l) = \nonumber\\
    \text{SFiLM}\big(D(g)h_l | D(g)&\gamma_l, D(g)\beta_l\big), g\in \SO(3)
\end{align}
\end{proposition}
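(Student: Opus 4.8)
The plan is to reduce the claim to two elementary facts about the Wigner D-matrices $D^l(g)$: they are orthogonal, hence norm- and inner-product-preserving, and they act linearly. Every nonlinearity in Equation \ref{equ:SFiLM} is built only from an inner product, a norm, and scalar multiplication, so orthogonality will let all of these pieces pass straight through $D(g)$.

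Concretely, I would expand the right-hand side of Equation \ref{equ:equ_sfilm} directly from the definition,
\[
\text{SFiLM}\big(D(g)h_l \mid D(g)\gamma_l, D(g)\beta_l\big)
= \big(D(g)\gamma_l\big)^{\text{T}}\big(D(g)h_l\big)\,\frac{D(g)h_l}{\|D(g)h_l\|} + D(g)\beta_l ,
\]
then use $D(g)^{\text{T}}D(g)=I$ (orthogonality of the Wigner D-matrices, stated in Section \ref{sec:background}) to collapse the scalar factor $\big(D(g)\gamma_l\big)^{\text{T}}\big(D(g)h_l\big) = \gamma_l^{\text{T}}h_l$, and the same orthogonality to get $\|D(g)h_l\| = \|h_l\|$. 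Since $\gamma_l^{\text{T}}h_l$ is now a plain scalar it commutes with the matrix $D(g)$, and linearity of $D(g)$ lets me factor it, together with $D(g)\beta_l$, out of the whole expression, yielding $D(g)\big(\gamma_l^{\text{T}}h_l\,\tfrac{h_l}{\|h_l\|} + \beta_l\big) = D(g)\cdot\text{SFiLM}(h_l\mid\gamma_l,\beta_l)$, which is the left-hand side.

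Two small points I would address for completeness. First, the statement already supplies $D(g)\gamma_l$ and $D(g)\beta_l$ as the transformed conditioning inputs; this is consistent with the upstream construction $\gamma=\Gamma(C),\beta=\text{B}(C)$ because $\Gamma$ and $\text{B}$ are equivariant linear layers, so a transformation of $C$ induces exactly these Wigner actions — worth noting for end-to-end equivariance, though not needed for the proposition. Second, $h_l/\|h_l\|$ is defined only for $h_l\neq 0$; in practice a small additive constant $\varepsilon$ appears in the denominator, and the argument goes through verbatim since $\|D(g)h_l\|^2+\varepsilon = \|h_l\|^2+\varepsilon$.

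There is essentially no hard step: the proof is a one-line computation once orthogonality of $D^l(g)$ is invoked. The only thing to watch is bookkeeping — confirming that the same degree-$l$ block $D(g)=D^l(g)$ acts on $h_l$, $\gamma_l$, and $\beta_l$ (SFiLM treats each degree independently, so there is no cross-degree mixing), and that the scalar $\gamma_l^{\text{T}}h_l$ is genuinely $\SO(3)$-invariant rather than merely equivariant, which again follows immediately from orthogonality.
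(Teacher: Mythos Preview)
Your proposal is correct and follows essentially the same route as the paper: expand the right-hand side, invoke orthogonality of the Wigner D-matrices to reduce the inner product and the norm, then factor $D(g)$ out by linearity. The only cosmetic difference is that the paper attributes the final factoring step to Schur's lemma, whereas you (more directly) appeal to scalar commutation and linearity of $D(g)$; your extra remarks on the upstream equivariance of $\Gamma,\text{B}$ and the $\varepsilon$-regularized denominator are nice additions not present in the paper's proof.
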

The proposition is proved by the orthogonal property of the Wigner D-matrices $D$ and Schur's lemma \cite{schur1905neue}, please see Appendix \ref{prof:sfilm} for details.

\begin{figure*}[!ht]
    \centering
    \begin{subfigure}[b]{0.14\linewidth}
        \centering
        \includegraphics[width=\columnwidth]{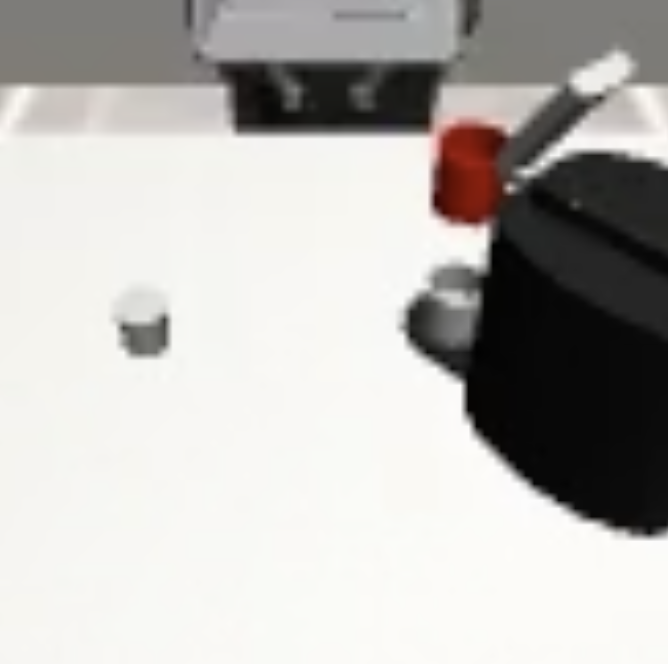}
        \caption{Coffee $0^\circ$}
    \end{subfigure}
    \begin{subfigure}[b]{0.14\linewidth}
        \centering
        \includegraphics[width=\columnwidth]{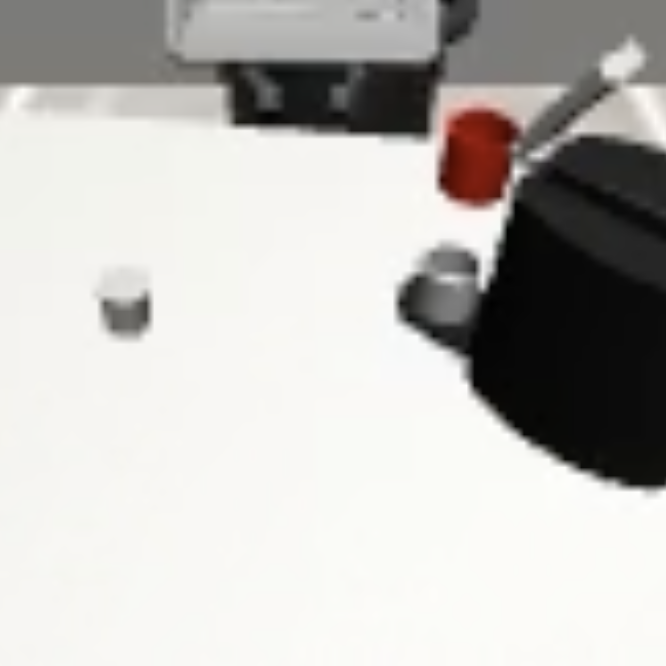}
        \caption{$-15^\circ\sim15^\circ$}
    \end{subfigure}
    \begin{subfigure}[b]{0.14\linewidth}
        \centering
        \includegraphics[width=\columnwidth]{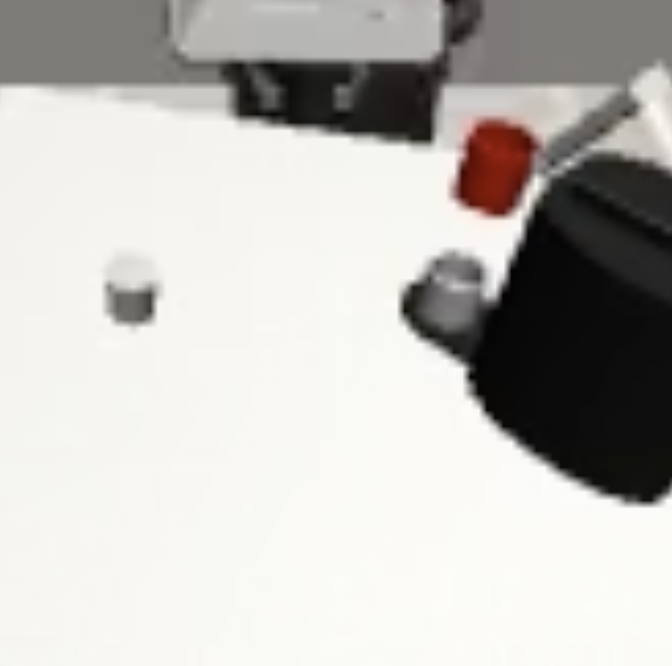}
        \caption{$-30^\circ\sim30^\circ$}
    \end{subfigure}
    \hspace{25pt}
    \begin{subfigure}[b]{0.14\linewidth}
        \centering
        \includegraphics[width=\columnwidth]{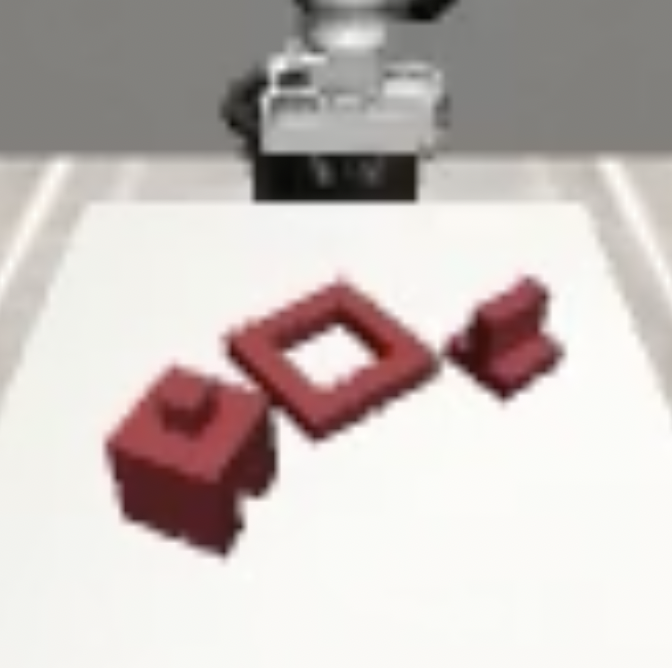}
        \caption{Thr. Pc. D2}
    \end{subfigure}
    \begin{subfigure}[b]{0.14\linewidth}
        \centering
        \includegraphics[width=\columnwidth]{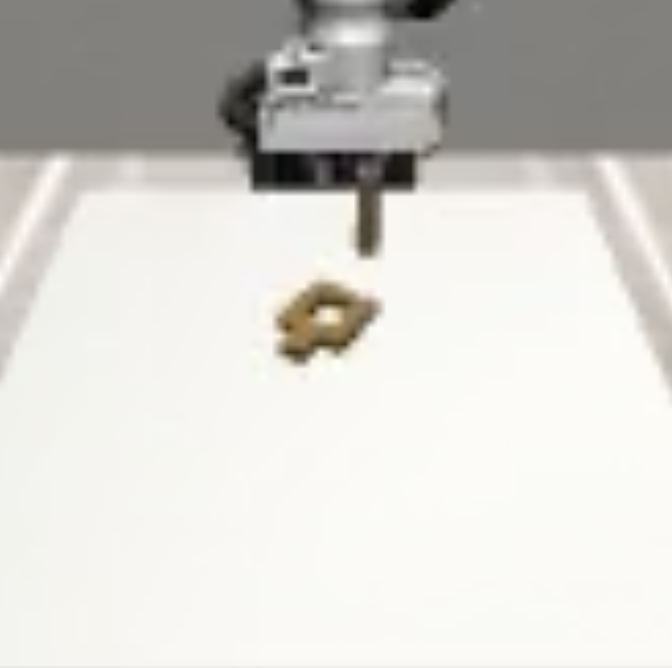}
        \caption{Square D2}
    \end{subfigure}
    \begin{subfigure}[b]{0.14\linewidth}
        \centering
        \includegraphics[width=\columnwidth]{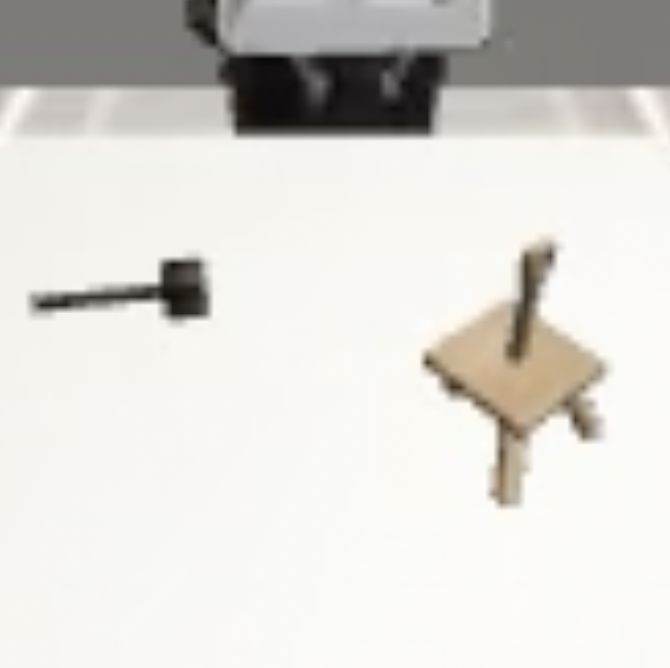}
        \caption{Thread. D2}
    \end{subfigure}
    
    \caption{MimicGen tasks with $\SE(3)$ initialization ((a)-(c), showing 1 of 4 tasks) and $\SE(2)$ initialization ((d)-(f), showing 3 of 12 tasks).}
    \label{fig:mimicgen}
\end{figure*}

\begin{table*}[h]
\setlength\tabcolsep{3.2pt}

\caption{Evaluation success rate on 4 MimicGen tasks with 3 levels of $\SE(3)$ initialization. We train all the baselines on progressively tilted environments with 100 demonstrations. As the degress of $\SE(3)$ initialization increases, SDP maintains reasonable performance while the performance of other baselines drop severely. Results averaged over three seeds.}

\vskip 0.15in
\begin{center}
\tiny

\begin{tabular}{@{}lcccccccccccccccccc@{}}
\toprule
& & & & \multicolumn{3}{c}{Coffee} & \multicolumn{3}{c}{Three Pc. Assembly} & \multicolumn{3}{c}{Square} & \multicolumn{3}{c}{Threading} & \multicolumn{3}{c}{Average}\\
\cmidrule(lr){5-7} \cmidrule(lr){8-10} \cmidrule(lr){11-13} \cmidrule(lr){14-16} \cmidrule(lr){17-19}
Method & Equ & Ctrl & Obs & $0^\circ$ & $15^\circ$ & $30^\circ$ & $0^\circ$ & $15^\circ$ & $30^\circ$ & $0^\circ$ & $15^\circ$ & $30^\circ$ & $0^\circ$ & $15^\circ$ & $30^\circ$ & 
$0^\circ$ & $15^\circ$ & $30^\circ$ \\
\midrule
SDP & $\SE(3)$ & Rel & PCD & 
63  & \textbf{54} & \textbf{33} & 
\textbf{67}  & \textbf{49} & \textbf{37} & 
\textbf{62}  & \textbf{38} & \textbf{31} & 
\textbf{60}  & \textbf{53} & \textbf{39} &
\textbf{63}  & \textbf{49} & \textbf{35}\\
EquiDiff \cite{wangequivariant} & $\mathrm{C}8\ \subset \SO(2)$ & Abs & Voxel & 
 \textbf{65} & 43 & 29 & 
 37 & 15 & 8 & 
 39 & 3 & 3 & 
 39 & 20 & 10 &
 45 & 20 & 13 \\
DiffPo ~\cite{chi2023DP} & N/A & Abs & RGB &
 44 & 23 & 16 & 
 4 & 2 & 2 & 
 8 & 0 & 1 & 
 17 & 10 & 8 &
 18 & 9 & 7 \\

EquiBot ~\cite{yangequibot} & $\SO(3)$ & Abs & PCD &
0  & 1 & 0 & 
1  & 1 & 1 & 
0  & 1 & 1 & 
6  & 4 & 0 &
 2 & 2 & 1 \\

\bottomrule
\end{tabular}
\end{center}
\label{tab:se3_results}
\end{table*}

\begin{table*}[h!]
\setlength\tabcolsep{3.2pt}

\caption{Evaluation success rate on 12 MimicGen tasks with $\SE(2)$ initialization. We train all the baselines with 100 demonstrations. SDP demonstrates the best performance on 10 tasks. Results averaged over three seeds.}

\vskip 0.15in
\begin{center}
\tiny
\begin{tabular}{@{}lccccccccccccccc@{}}
\toprule

 &  &  &  & Stack &  &  &  & Three Pc. & Hammer & Mug & & Nut & Pick & Coffee & Average\\
 &  &  & Stack & Three & Square & Threading & Coffee & Assembly & Cleanup & Cleanup & Kitchen & Assembly & Place & Preparation & Success\\
Method & Ctrl & Obs & D1 & D1 & D2 & D2 & D2 & D2 & D1 & D1 & D1 & D0 & D0 & D1 & Rate\\

\midrule

SDP & 
Rel
& PCD &
\textbf{100} &
\textbf{98}  &
\textbf{62}  &
\textbf{60} &
63 &
\textbf{67} &
\textbf{82} &
\textbf{54} &
\textbf{89} &
\textbf{92} &
\textbf{73} &
73 &
\textbf{76} \\


\midrule

EquiDiff \cite{wangequivariant} & 
\multirow{6}{*}{Abs}  
& Voxel &
99 &
75  &
39  &
39 &
\textbf{65} &
37  &
70  &
53 &
85 &
67 &
58 &
\textbf{80} &
64 \\

EquiDiff \cite{wangequivariant} &  & RGB &
93  &
55  &
25  &
22  &
60 &
15 &
65 &
49 &
67 &
74 &
42 &
77 &
54 \\

DiffPo-C~\cite{chi2023DP} & & RGB &
76 &
38 &
8 &
17 &
44 & 
4 & 
52 & 
43 & 
67 & 
55 & 
35 & 
65 &
42 \\

DiffPo-T~\cite{chi2023DP} & & RGB &
51  &
17  &
5 &
11 &
47 &
1 & 
48 &
30 & 
54 & 
31 & 
15 & 
38 &
29 \\

DP3~\cite{ze20243d} &  & PCD &
69 &
7  &
7 & 
12 &
34 &
0 & 
54 &
21 & 
45 & 
16 & 
12 & 
10 &
24 \\

ACT~\cite{zhao2023learning} & & RGB &
35  &
6  &
6 &  
10 &
19 &
0 & 
38 & 
23 & 
37 & 
42 & 
7 & 
32 &
21 \\

\midrule
EquiDiff \cite{wangequivariant} & \multirow{4}{*}{Vel} & Voxel &
95  &
59  & 
25 & 
33 &
55 &
5  &
64 &
39 &
69 &
53 &
40 &
48 &
49 \\

EquiDiff \cite{wangequivariant} &  & RGB &
75  &
25  & 
11  &
11  &
41 &
1 &
49 &
29 &
61 &
44 &
29 &
49 &
35 \\

DiffPo-C~\cite{chi2023DP} & & RGB &
81  &
26  & 
6  & 
13 &
43 &
2 &
43 &
25 &
42 &
42 &
35 &
42 &
33 \\

BC RNN~\cite{mandlekar2021matters} &  & RGB &
59  &
12  & 
8  & 
7 &
37 &
0 &
32 &
19 &
31 &
35 &
21 &
14 &
23 \\

\bottomrule
\end{tabular}
\end{center}
\label{tab:se2_results}
\end{table*}

\section{Experiments}
\subsection{Simulation Experiments}
\label{sec:sim_exp}

\paragraph{Experimental Settings}





We conduct simulation experiments using the MimicGen \cite{mandlekar2023mimicgen} environment, built on the Mujoco simulator \cite{todorov2012mujoco}, which features diverse tasks that are contact-rich, precise, and long-horizon (see Figure \ref{fig:mimicgen}). Unlike scripted demonstrations, which are unimodal, or Reinforcement Learning (RL) agent-generated demonstrations, which are Markovian, MimicGen generates multi-modal, non-Markovian trajectories from a few human demonstrations \cite{mandlekar2021matters}, making it well-suited for benchmarking learning from human demonstrations.

MimicGen provides observations in the form of RGBD images from both a front view and an in-hand view, along with a 7-DoF robot state. The RGB images have a resolution of $84\times84\times3$, while RGBD data can be used to reconstruct either 3D colored voxels ($84^3$) or colored point clouds (PCD) with 1024 points. For PCD, we exclude table points, following DP3 \cite{ze20243d}. The action space in MimicGen consists of a 6-DoF gripper pose and a 1-DoF gripper aperture. Three control modes are used: Absolute Control, which defines the gripper trajectory in the robot frame; Relative Control, which defines it in the current gripper frame; and Velocity Control, which determines the next gripper pose relative to the previous one \cite{chi2024universal}.

To evaluate robustness, we modify four MimicGen tasks with $\SE(3)$ initialization by randomly tilting the table within a defined range and randomly placing objects on the tabletop while keeping the robot base upright. Benchmarking is conducted across three difficulty levels with progressively increasing tilt ranges: $[0]$, $[-15^\circ, 15^\circ]$, and $[-30^\circ, 30^\circ]$. Additionally, we also compare various baselines across all 12 original MimicGen tasks.

\begin{figure*}[ht!]%
\centering
\subfloat[Single-arm Turn\_Lever]{\includegraphics[width=1.6cm]{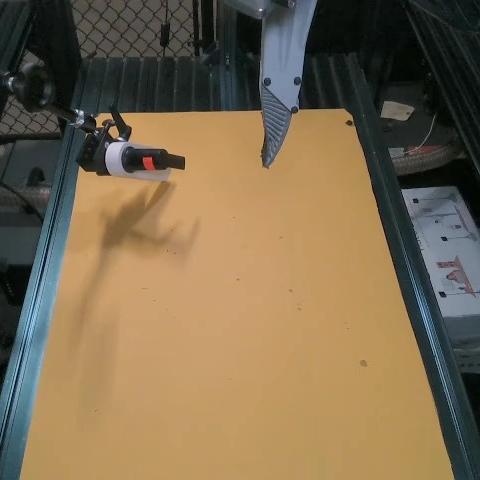} \includegraphics[width=1.6cm]{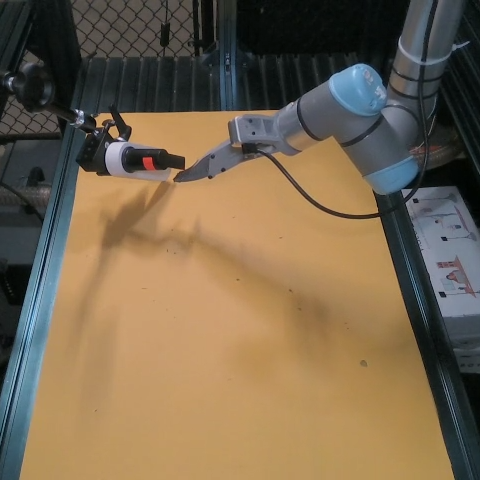} \includegraphics[width=1.6cm]{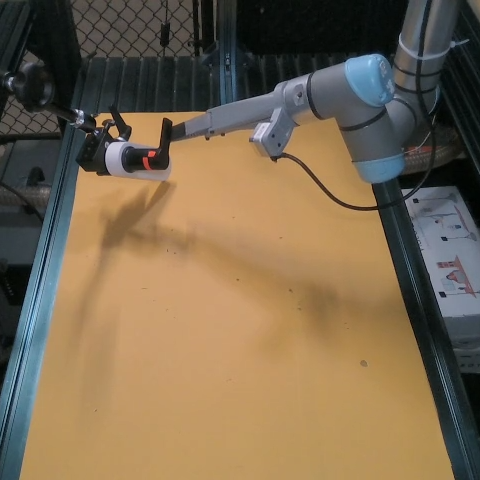}}\qquad
\subfloat[Single-arm Push\_Eraser]{\includegraphics[width=1.6cm]{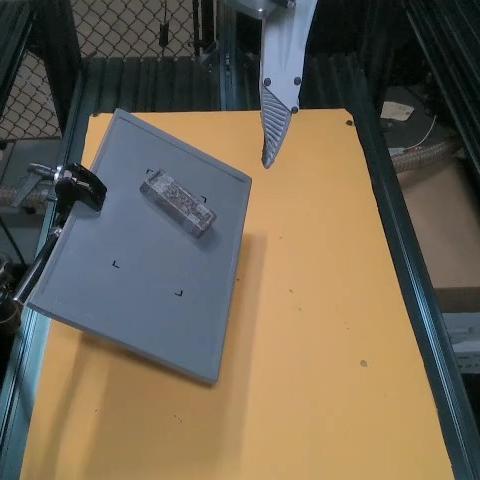} \includegraphics[width=1.6cm]{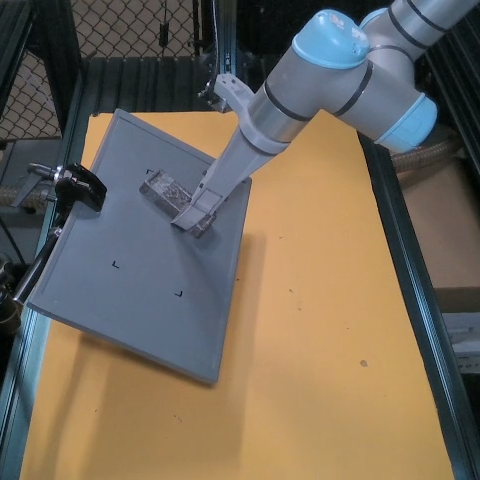} \includegraphics[width=1.6cm]{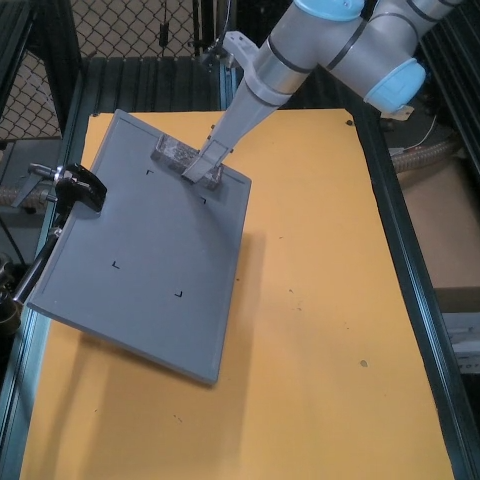}}\qquad
\subfloat[Bi-manual Grasp\_Box]{\includegraphics[width=1.6cm]{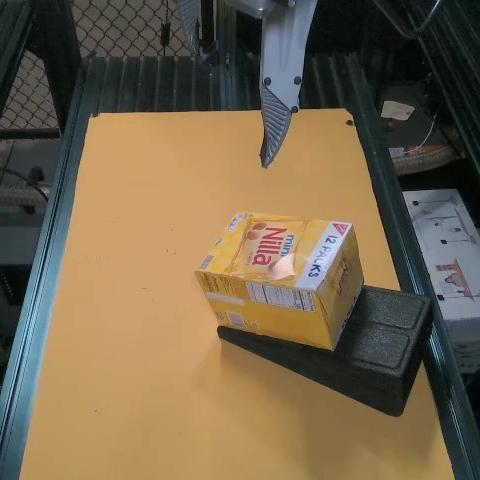} \includegraphics[width=1.6cm]{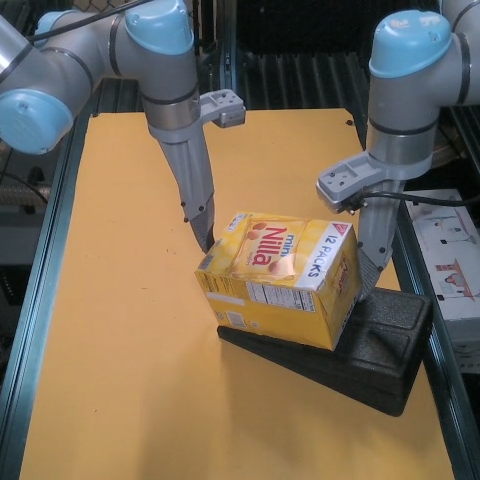} \includegraphics[width=1.6cm]{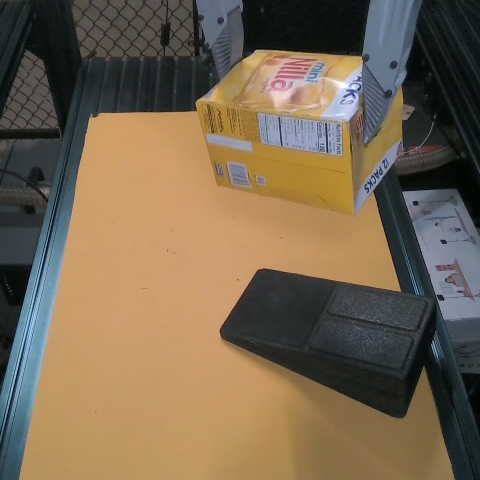}}

\subfloat[Bi-manual Multi-step Flip\_Book]{\includegraphics[width=1.6cm]{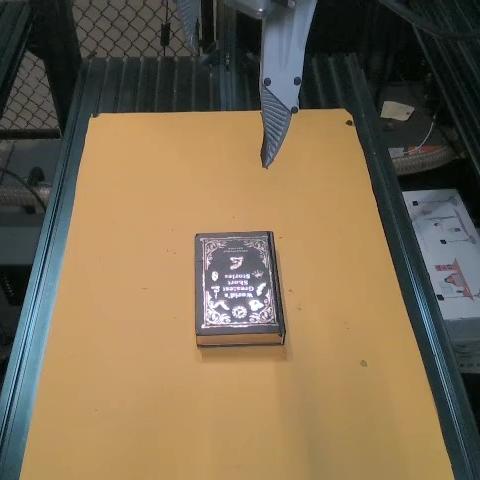} \includegraphics[width=1.6cm]{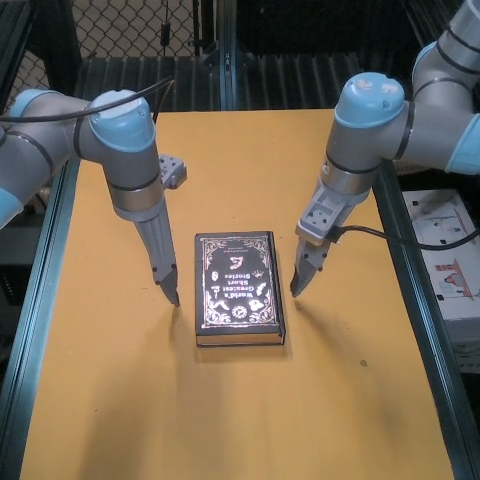} \includegraphics[width=1.6cm]{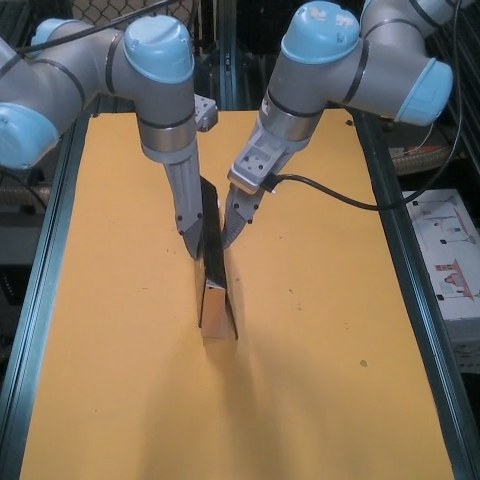} \includegraphics[width=1.6cm]{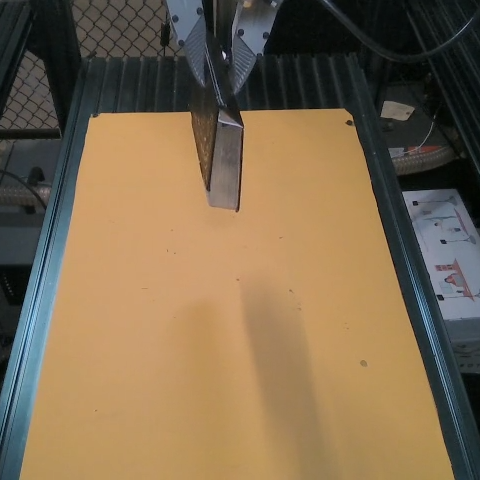}}\quad
\subfloat[Bi-manual Multi-step Pack\_Package]{\includegraphics[width=1.6cm]{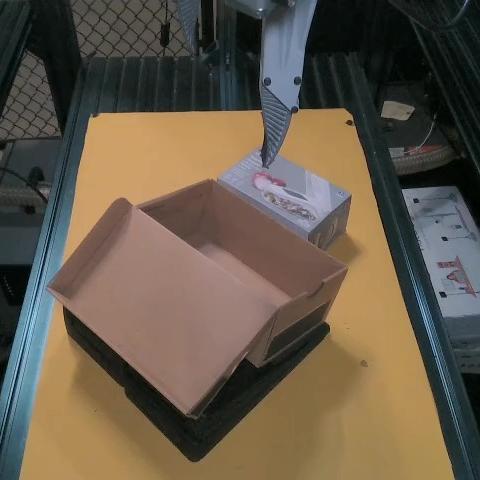} \includegraphics[width=1.6cm]{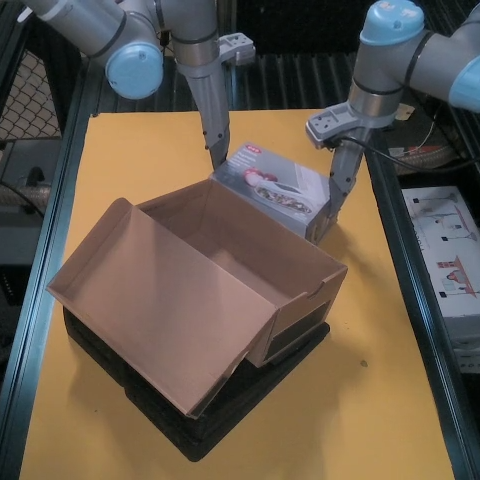} \includegraphics[width=1.6cm]{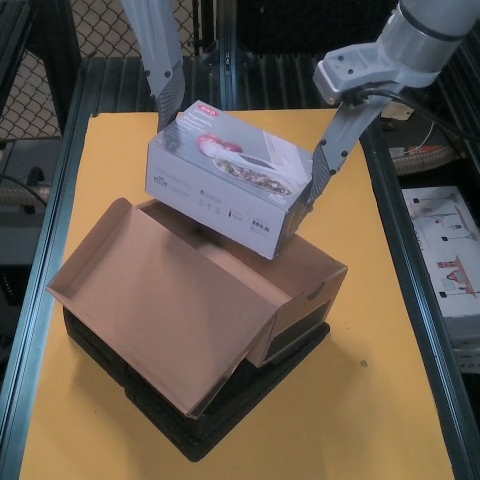}  \includegraphics[width=1.6cm]{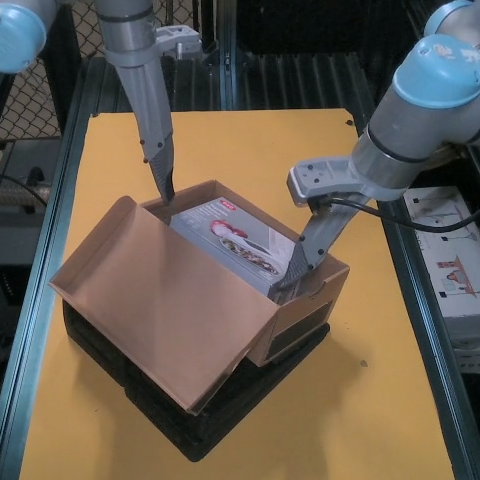} \includegraphics[width=1.6cm]{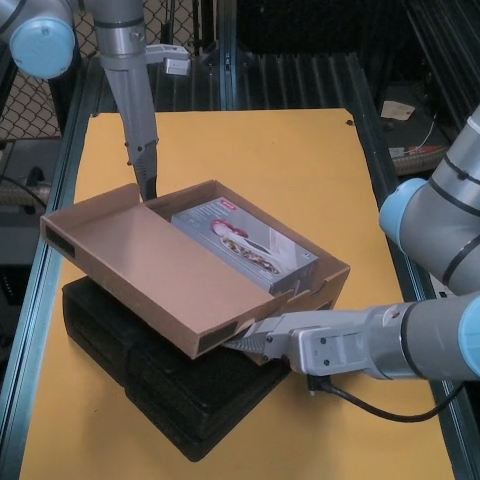} \includegraphics[width=1.6cm]{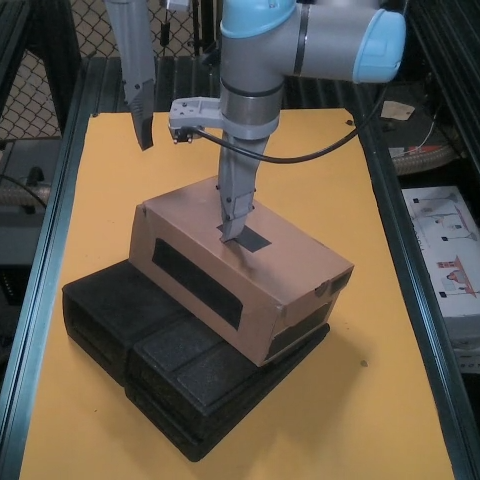}}\\
\caption{Five physical robotic manipulation tasks.}
\label{fig:PhysicalExperiments}
\end{figure*}

\begin{table}[ht!]
\setlength\tabcolsep{5pt}

\caption{Success rate ($\%$) of 5 physical experiments over 20 evaluation episodes. The action space and number of training demonstrations are listed under each task. Overall, SDP is $61\%$ better than EquiDiff and $71\%$ better than DiffPo-C. Results from one seed. * using point clouds with $2048$ points.
}

\vskip 0.15in
\begin{center}
\tiny
\begin{tabular}{@{}lcccccc@{}}
\toprule
 & Turn Lever & Push Eraser & Grasp Box & Flip Book & Pack Package & Avg.\\
 & 6 DoF & 6 DoF & 12 DoF & 12 DoF & 12 DoF & Succ.\\
Method & 33 Demos & 33 Demos & 33 Demos & 66 Demos & 66 Demos & Rate\\
\midrule
SDP 
  & 80 
  & 35/ 90* 
  & 85 
  & 65 
  & 70 & 78\\
EquiDiff 
  & 20 
  & 30 
  & 35 
  & 0 
  & 0 & 17\\
DiffPo-C
  & 10 
  & 10 
  & 15 
  & 0 
  & 0 & 7\\

\bottomrule
\end{tabular}
\end{center}
\label{tab:physcial_results}
\end{table}

We compare several baselines in our experiments: 1) EquiDiff \cite{wangequivariant} – an $\SO(2)$-equivariant diffusion policy using either voxel or RGB image observations. 2) DiffPo \cite{chi2023DP} – the original diffusion policy, employing either a convolutional (-C) or transformer (-T) backbone in the diffusion network. 3) EquiBot \cite{yangequibot} – an $\SO(3)$-equivariant diffusion policy with up to degree $l=1$ representations. 4) DP3 \cite{ze20243d} – a diffusion policy based on point-cloud representations. 5) ACT \cite{zhao2023learning} (Action Chunking Transformer) – a model capturing multi-modality via a Variational Autoencoder (VAE). 6) BC-RNN \cite{mandlekar2021matters} – a behavioral cloning approach that captures multi-modality using a Gaussian Mixture Model (GMM) and accounts for non-Markovian dynamics via a Recurrent Neural Network (RNN). A relevant baseline, ET-SEED \cite{tie2024seed}, is not included because the code was unavailable before the initial submission. Following \cite{chi2023DP, wangequivariant}, we train all baselines using DDPM \cite{ho2020denoising} with 100 denoising steps. For details on hyperparameters, see Appendix \ref{app:hyperparameters}. We report the maximum test success rate throughout training, averaging results over 50 rollouts for each of the three seeds.

\paragraph{Results on Tasks with $\SE(3)$ Initialization}



Table \ref{tab:se3_results} shows that SDP outperforms all baselines across all tilting ranges, except for the Coffee $0^\circ$ task, demonstrating superior sample efficiency. Notably, as the tilting range increases, SDP achieves a more significant relative performance improvement over the baselines. This highlights SDP’s strong $\SE(3)$ generalization, enabled by its continuous $\SE(3)$ equivariance property. However, performance declines for all methods, including SDP, as the tilting range increases. We hypothesize that this drop is caused by point-cloud occlusion and object instability due to gravity, both of which disrupt $\SE(3)$ equivariance.

\paragraph{Results on Tasks with $\SE(2)$ Initialization}




Table \ref{tab:se2_results} shows that SDP outperforms all baselines across 10 tasks, except for Coffee and Coffee\_Preparation. Despite the variations in $\SE(2)$, SDP still demonstrates a notable advantage, suggesting that its continuous $\SE(2)$ equivariance benefits learning more effectively than the discrete $C_8$ equivariance in EquiDiff. The lower performance of SDP on Coffee and Coffee\_Preparation may be attributed to the low-resolution point clouds, which struggle to capture fine details—such as the slack between the coffee pod and its receptacle—potentially hindering precise manipulation.

\subsection{Physical Experiments}
\label{sec:phy_exp}
\paragraph{Experimental Settings} We further evaluate the performance of SDP across 5 physical tasks in Figure ~\ref{fig:PhysicalExperiments}, using a robot station shown in Figure ~\ref{fig:station}. Turn\_Lever involves manipulating an articulated object, while Push\_Eraser requires pushing a small eraser. Grasp\_Box challenges the policy to maintain a closed kinematic chain. Flip\_Book involves rich contact between the end-effector, the tabletop, and the book. Pack\_Package is a long horizon task. The observations are captured by two stationary RGBD cameras positioned above the workspace to minimize occlusion. Point clouds with $1024$ points are reconstructed from the RGBD images (for Push\_Eraser we also tested $2048$ points). The actions are the 6 DoF gripper poses for single-arm tasks or 12 DoF gripper poses for bi-manual tasks. 

\paragraph{Training Dataset from Human Demonstrations} 
We use Gello \cite{wu2023gello} to collect demonstrations with objects initialized in random $\SE(3)$ poses. For the single-arm tasks, we collect 30 successful human demonstrations. Additionally, we record an extra 10\% demos as the recovery demos at specific poses. Similarly, for the bi-manual Grasp\_Box task, we collect 33 demos at  random $\SE(3)$ poses. For more challenging bi-manual tasks like Flip\_Book and Pack\_Package, we collect 66 demos. Figure \ref{fig:SE3PoseDistribution} visualizes the $\SE(3)$ training pose distribution for all five tasks. Further details on the tasks are provided in Appendix \ref{sec:task_description}.

\paragraph{Results and Discussion}  We benchmark SDP against the top two baselines, EquiDiff \cite{wangequivariant} and DiffPo-C \cite{chi2023DP}, from the simulation experiment (Table \ref{tab:se3_results}). We train all three models using DDIM \cite{songdenoising} and inference with 8 denoising steps for all tasks. Each baseline is evaluated on 20 rollouts per physical task, with each rollout initialized using object poses in novel  $\SE(3)$ poses unseen in the training set. The success rates are summarized in Table \ref{tab:physcial_results}, with a detailed breakdown of the success rates provided in Table \ref{tab:physcial_results_step}. For the Push\_Eraser task, increasing the PCD resolution to $2048$ points enables accurate localization of the eraser, resulting in a performance improvement of over $50\%$.

SDP significantly outperforms all baseline methods across every task and embodiments, achieving a $61\%$ higher success rate than EquiDiff and a $71\%$ improvement over DiffPo-C. These significant gains in sample efficiency and spatial generalization are largely attributed to its inherent $\SE(3)$ equivariance. For instance, in the Turn\_Lever task, SDP successfully locates and rotates a lever that is randomly clamped in 3D space. In comparison, EquiDiff frequently misdirects the gripper to the workspace center, entirely missing the lever, while DiffPo approaches the lever but only hovers nearby without engaging it. Further discussion of common failures can be found in Appendix \ref{sec:physical_results_step}. 









\subsection{Ablation Study}


Table \ref{tab:ablation} presents six ablations: 1) \textbf{Discrete SDP} - replaces SDTU that has continuous equivariance with discrete equivariant denoising U-net with Octahedron (cubical) discretization, this is a $\SE(3)$ adaption of \cite{wangequivariant, zhao2025hierarchical}. 2) \textbf{SDP Absolute Action} – defines actions in the workspace frame instead of SDP’s relative action formulation, which defines actions in the current gripper frame. 3) \textbf{DP3-canonical} - DP3 \cite{ke20243d} with canonicalized observation-action space, by transforming the point cloud and trajectory to the gripper frame. This achieves $\SE(3)$-invariance. 4) \textbf{EquiBot Rel.} – removes SDP’s spherical Fourier features and replaces its model with EquiBot \cite{yangequibot}, while keeping the relative action formulation. 5) \textbf{DP3 Absolute Action} - \cite{ke20243d} the original DP3 policy, this baseline ablates relative action, spherical representation, and equivariance. 6) \textbf{EquiBot Absolute Action} – removes both the spherical Fourier features and the relative action formulation, adopting EquiBot \cite{yangequibot} in the absolute action formulation.

The results are shown in Table \ref{tab:ablation}. \textbf{Discrete SDP} trivially modifying \cite{wangequivariant} to be $\SE(3)$ equivariant, sufferers from discretization error thus underperforms SDP. The relative action formulation that achieves 3D translational equivariance, also plays a key role, as its removal in \textbf{SDP Abs.} causes major performance drops, particularly in the Coffee and Square tasks. \textbf{DP3-canonical} leverages $\SE(3)$ invariant features, outperforms \textbf{DP3} that did not leverage, but still underperforms SDP by a large margin, demonstrating the advantage of equivariant features. This matches the finding in \cite{miller2020relevance}. The \textbf{EquiBot Rel.} ablations result in significant performance drops across all four tasks, indicating that the spherical Fourier representation is the most critical factor for SDP’s performance. Finally, \textbf{DP3 Abs., EquiBot Abs.} further amplifies the performance degradation, demonstrating that removing both the relative action formulation and spherical Fourier representation is more detrimental than removing either one alone.

\begin{table}[!ht]
\setlength\tabcolsep{3.2pt}

\caption{Ablation study.  The relative action space and the spherical representation are critical for the $\SE(3)$ generalization, while the latter one is more important. Results from one seed.
}

\vskip 0.15in
\begin{center}
\tiny
\begin{tabular}{@{}lccccccccccc@{}}
\toprule
 & Rel.  & Spher.  & Equi- & Coffee & Thr. Pc.  & Square & Thread. & Avg.\\
Method & Act. & Rep. & variance & $15^\circ$ & As. $15^\circ$ & $15^\circ$ & $15^\circ$ & SR\\
\midrule
SDP & \cmark & \cmark & $\SE(3)$ equ. & 
54  &  
49  &  
38  &  
53  &  
49 \\
Discrete SDP & \cmark & \cmark & Octahedron & 
42  &  
16  &  
34  &  
48  &  
35 \\
SDP Abs. & \xmark & \cmark & $\SO(3)$ equ. & 
18  &  
42  &  
0  &  
44  &  
26 \\
DP3-canonical & \cmark & \xmark & $\SE(3)$ inv. & 
40  &  
0  &  
8  &  
12  &  
15 \\
EquiBot Rel. & \cmark & \xmark & $\SE(3)$ equ. & 
18  &  
2  &  
4  &  
6  &  
8 \\
DP3 Abs. & \xmark & \xmark & None & 
20  &  
0  &  
0  &  
4  &  
6 \\
EquiBot Abs. & \xmark & \xmark & $\SO(3)$ equ. & 
1  &  
1  &  
1  &  
4  &  
2 \\
\bottomrule
\end{tabular}
\end{center}
\label{tab:ablation}
\end{table}


\subsection{Additional Studies}

\paragraph{Performance VS Degree $l$ in spherical Fourier features}
As shown in Table \ref{tab:degree_l}, increasing the degree from 1 to 2 improves performance across all tasks. However, beyond degree 2, performance saturates while computational cost increases. Since SDP is designed for real-time robot control, we select $l=2$.

\begin{table}[!ht]
\setlength\tabcolsep{3.2pt}

\caption{Success rate VS degree $l$ of the spherical Fourier feature. Results from one seed.
}

\vskip 0.15in
\begin{center}
\tiny
\begin{tabular}{@{}lcccccc@{}}
\toprule
Degree $l$ & Coffee $15^\circ$ & Thr. Pc. As. $15^\circ$ & Square $15^\circ$ & Threading $15^\circ$ & Avg. SR\\
\midrule

3 &
52  &  
52  &  
66  &  
52  &  
56 \\
2 (SDP) &
54  &  
49  &  
38  &  
53  &  
49 \\
1 &
44  &  
4  &  
10  &  
46  &  
26 \\

\bottomrule
\end{tabular}
\end{center}
\label{tab:degree_l}
\end{table}

\paragraph{Sample Efficiency}
To assess the impact of training dataset size on performance, we evaluate SDP, EquiDiff, and DiffPo on four MimicGen tasks with tilted angles in the range $[0, 15^\circ]$, using 100, 316, or 1000 demonstrations. Figure~\ref{fig:scale_dataset} summarizes the results, with each point representing the average success rate across all four tasks. 

SDP achieves a success rate of 48.5\% using only 100 demonstrations, surpassing EquiDiff by 4\% while utilizing just one-tenth of the data, indicating a $10\times$ improvement in data efficiency. Similarly, EquiDiff attains a 20\% success rate with 100 demonstrations, matching the performance of DiffPo, which requires approximately 300 demonstrations, supporting a $3\times$ gain in data efficiency.

\label{study:sample_efficiency}
\begin{figure}[ht!]
    \centering
    \includegraphics[width=0.45\textwidth]{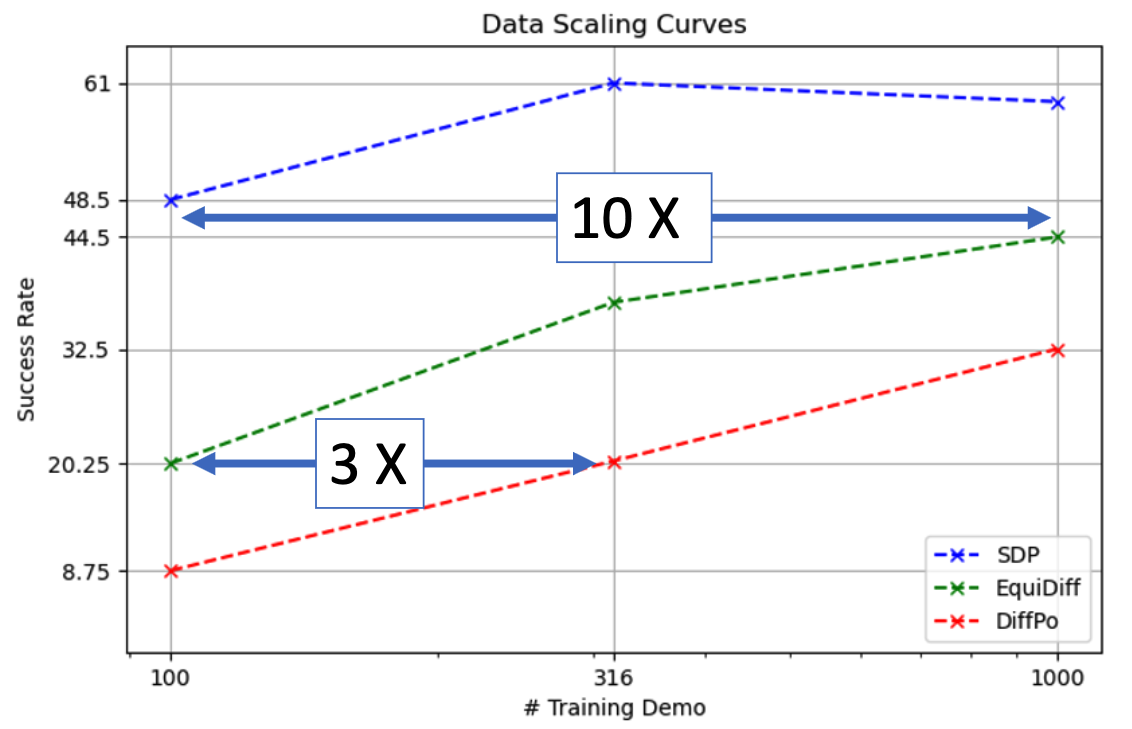}
    \caption{Impact of training dataset size on task success rates: increasing the number of demonstrations from 100 to 316 (a $3\times$ increase) yields an average success rate improvement of 9–12\% across four tasks, with initial success rates of 48.5\% (SDP), 20.3\% (EquiDiff), and 8.8\% (DiffPo). At 1,000 demonstrations ($10\times$), the average success rate saturates for SDP at 60\%, while EquiDiff and DiffPo continue to improve, reaching 44.5\% and 32.5\%, respectively.}
    \label{fig:scale_dataset}
\end{figure}

The observed performance saturation of SDP beyond 300 demonstrations may be attributed to scene occlusions, as relying solely on agent-view and in-hand cameras can obscure critical areas of the workspace. Additionally, all demonstrations are generated from $10$ raw human demonstrations \cite{mandlekar2023mimicgen}, so increasing the number of generated demonstrations may not increase diversity in the data. Furthermore, the kinematic constraints of the robot may limit its ability to access certain regions, thereby impacting overall task success.

\paragraph{Inference Speed}
Table \ref{tab:inference_time} presents the inference time comparing SDP with four other baselines. DiffPo is the fastest (0.09s) while ET\-SEED is the slowest (29.4s). The inference time of SDP is on the same order of magnitude as that of the best baseline-DiffPo (approximately $5\times$), while SDP achieves continuous $\SE(3)$ equivariance, significantly better performance than DiffPo and EquiDiff in simulation and physical experiments, and does not require preprocessing.

SDP leverages spherical features by using EquiformerV2 \cite{liao2024equiformerv2} and SDTU, which is more lightweight ($66\times$ faster inference speed, $32\times$ larger batch size) than the $\SE(3)$-transformer that ET-SEED is based on. Moreover, SDP supports high orders of spherical harmonics, which is more expressive than Vector Neuron. Lastly, SDP achieves continuous $\SE(3)$-equivariance, where EquiDiff enforces discrete $\mathrm{C}8 \subset \SO(2)$ equivariance.

\begin{table}[!ht]
\setlength\tabcolsep{3.2pt}

\caption{Comparison of inference time. At the costs of $5\times$ solver than DiffPo, SDP achieves continuous $\SE(3)$ equivariance and does not need preprocessing. The inference time and the training batch size are tested on a commercial GPU with 24GB RAM.}

\vskip 0.15in
\begin{center}
\tiny
\begin{tabular}{@{}lccccc@{}}
\toprule
 & Inference & Pre- & Batch & Equivariance & Neural \\
 Method & Time (s) $\downarrow$ & processing & Size $\uparrow$ & & Network \\
\midrule
SDP & 0.44 & No & 32 & $\SE(3)$ & EquiformerV2, SDTU \\
DiffPo & 0.09 & No & 64 & None & Convolution \\
EquiDiff & 0.14 & No & 64 & $\mathrm{C}8 \subset \SO(2)$ & ESCNN \\
EquiBot & 0.18 & Segmentation & 64 & $\SE(3)$ & Vector Neuron \\
ET-SEED & 29.4 & Segmentation & 1 & $\SE(3)$ & $\SE(3)$-Transformer \\

\bottomrule
\end{tabular}
\end{center}
\label{tab:inference_time}
\end{table}

\section{Conclusion and Limitations}
\label{sec:conclusion}




This paper introduces the Spherical Diffusion Policy (SDP), an $\SE(3)$-equivariant policy that generalizes to 3D scene arrangements using only a few demonstrations. SDP achieves this through three key components: 1) spherical Fourier features, providing compact and precise representations for continuous $\SO(3)$ equivariance; 2) spherical FiLM, enforcing equivariant conditioning; and 3) a spherical denoising temporal U-net, ensuring spatiotemporal equivariant denoising. SDP significantly outperforms strong baselines across simulation and real-world experiments, demonstrating effectiveness in both single-arm and bi-manual embodiments.


One limitation of the proposed method is that it operates in position control, ignoring contact forces, which leads to protective stops in the Flip\_Book task. An important future direction is to address this by learning a compliant, force-aware policy \cite{Kohlersymm, hou2024adaptive} in an equivariant manner. Another limitation is the low-resolution point cloud processing in the observation encoder, which struggles to capture fine details, such as these in the Push\_Eraser task. Using a more efficient Graph Neural Network \cite{zhao2021point, luo2024enabling} could help mitigate this issue.

\newpage

\section*{Impact Statement}


On the bright side, the proposed method enhances spatial generalization for manipulation policies, making it potentially deployable in real-world scenarios to significantly reduce human workload. On the dark side, the method lacks awareness of common scenes, which could result in risky actions such as harming individuals, causing fires, or damaging objects.

\section*{Acknowledgments}

The authors would like to thank Michael Schultz and Nathan Gere for the help on the physical experiments, Haojie Huang for the help on the simulation environments, Pranay Thangeda and Erica Aduh for the help on the robot platform setups.

\bibliography{example_paper}
\bibliographystyle{icml2025}

\newpage
\appendix
\onecolumn
\renewcommand{\thefigure}{A\arabic{figure}} 
\renewcommand{\thetable}{A\arabic{table}}   
\setcounter{figure}{0} 
\setcounter{table}{0}  

\section{Proofs}
\subsection{Proof of Proposition \ref{pro:mctc}}
\label{prof:mctc}
\begin{proof}
Focusing on the right-hand side of the Equation \ref{equ:equ_mctc}:
\begin{align}
    \sum_{j\in T}\sum_{i\in in}& D^l_{mn}(g) h_{l,m,j}^{i} w_{l,j-t}^{i,o} \\
    &= D^l_{mn}(g) \sum_{j\in T}\sum_{i} h_{l,m,j}^{i} w_{l,j-t}^{i,o} \label{equ:lin}\\
    &= D^l_{mn}(g) h_{l,m,t}^{o},
\end{align}
the line \ref{equ:lin} is because of Schur's lemma \cite{schur1905neue}, which states that any linear operation of $\SO(3)$ irreps acts as on each irreducible subspace is equivariant.
\end{proof}

\subsection{Proof of Proposition \ref{pro:sfilm}}
\label{prof:sfilm}
\begin{proof}
Focusing on the right-hand side of Equation \ref{equ:equ_sfilm}:
\begin{align}
\text{SFiLM}&\big(D(g)h_l | D(g)\gamma_l, D(g)\beta_l\big) \\
    &= (D(g)\gamma_l)^\text{T}D(g)h_l\frac{D(g)h_l}{||D(g)h_l||} + D(g)\beta_l \\
    &= \gamma_l^\text{T}D(g)^\text{T}D(g)h_l\frac{D(g)h_l}{||D(g)h_l||} + D(g)\beta_l \\
\end{align}
Because the Wigner D-matrices are orthogonal, we have:
\begin{align}
\text{SFiLM}&\big(D(g)h_l | D(g)\gamma_l, D(g)\beta_l\big) \\
    &= \gamma_l^\text{T}h_l\frac{D(g)h_l}{||h_l||} + D(g)\beta_l \\
    &= D(g) (\gamma_l^\text{T}h_l\frac{h_l}{||h_l||} + \beta_l) \label{equ:add}\\
    &= D(g) \cdot \text{SFiLM}(h_l | \gamma_l, \beta_l),
\end{align}
the line \ref{equ:add} is based on Schur's lemma \cite{schur1905neue}.
\end{proof}




\section{Physical Experiments and Results}\label{sec:real_task}

\subsection{Physical Experiment Workstation}\label{sec:Workstation}


Our physical robotic workstation, as shown in Figure \ref{fig:station}, is composed of two collaborative UR5e manipulators, each equipped with a compliant ray-fin finger\cite{FinRayEoAT2016}.Two scene cameras (RealSense D415) are positioned on either side of the workspace. GELLO controllers \cite{wu2023gello} are utilized to collect human demonstrations for physical robotic manipulation tasks.

\begin{figure}[ht!]%
\centering
{\includegraphics[width=.8\linewidth]{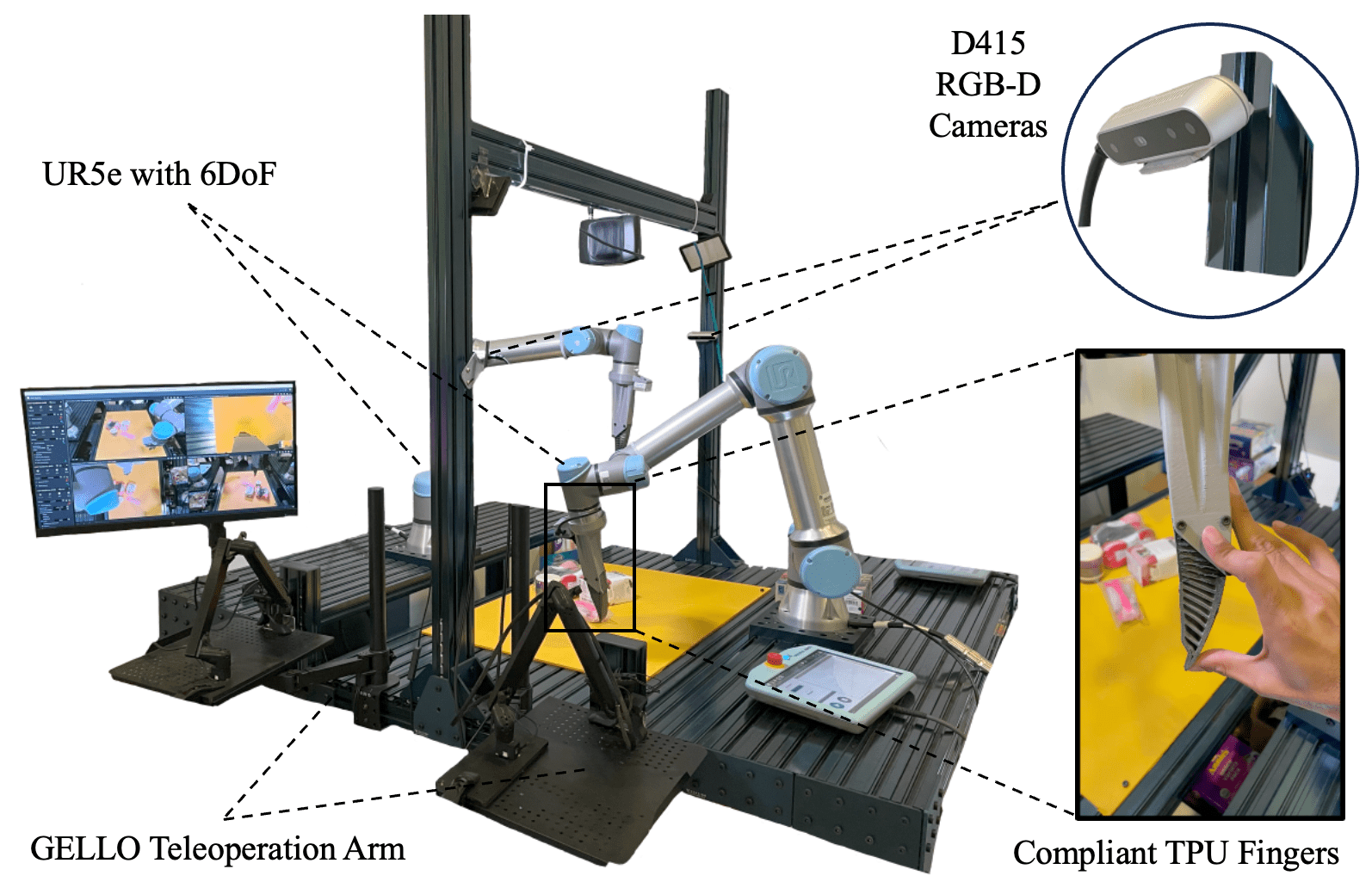}}
\caption{Overview of the manipulation experimental setup: two UR5e manipulators, each with a compliant ray-fin  fingers, two stationary overhead cameras, and GELLO teleop controllers.}
\label{fig:station}
\end{figure}


\subsection{Physical Manipulation Tasks and Training Datasets }\label{sec:task_description}

 We experiment with SDP on five physical tasks, as shown in Figure  \ref{fig:demonstrated_actions} : (a) Turn\_Lever involves manipulating an articulated object and (b) Push\_Eraser requires pushing a small object with a single manipulator; (c) Bi-manual Grasp\_Box challenges the policy to maintain a closed kinematic chain; (d) Flip\_Book involves rich contact between the end-effector and the book while transforming book's pose  with dexterous complex coordinated manipulation; (e) Pack\_Package is a long horizon task.

\begin{figure}[ht!]%
\centering
\subfloat[Single-arm Turn\_Lever]{\quad\includegraphics[width=5cm]{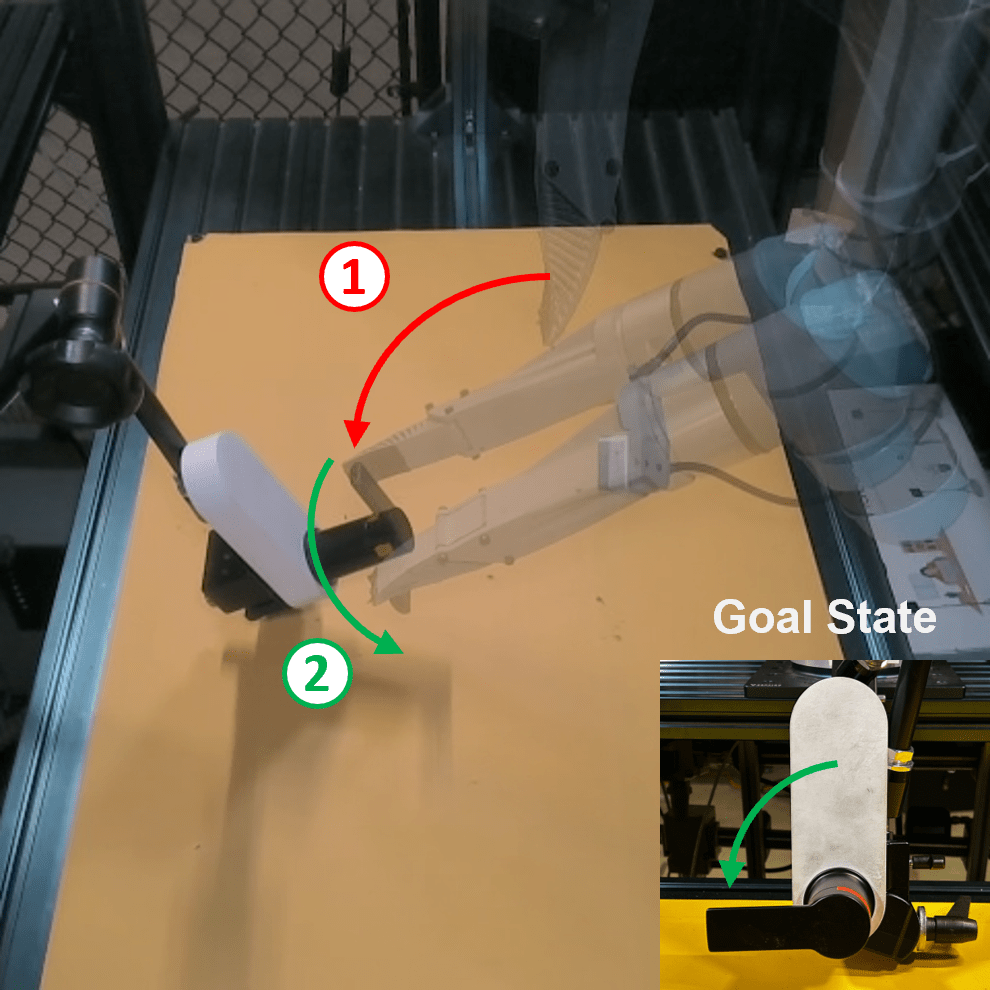}\quad}\;
\subfloat[Single-arm Push\_Eraser]{\quad\includegraphics[width=5cm]{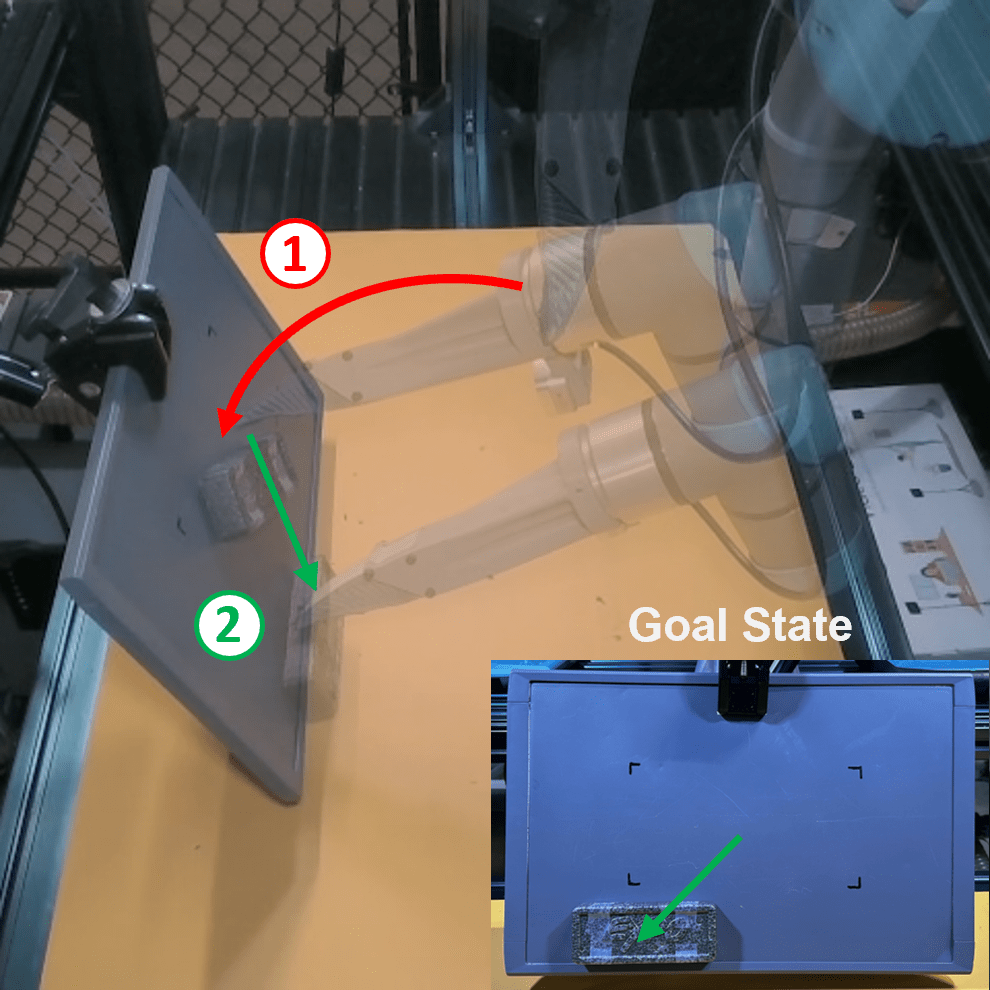}\quad}\;
\subfloat[Bi-manual Grasp\_Box]{\includegraphics[width=5cm]{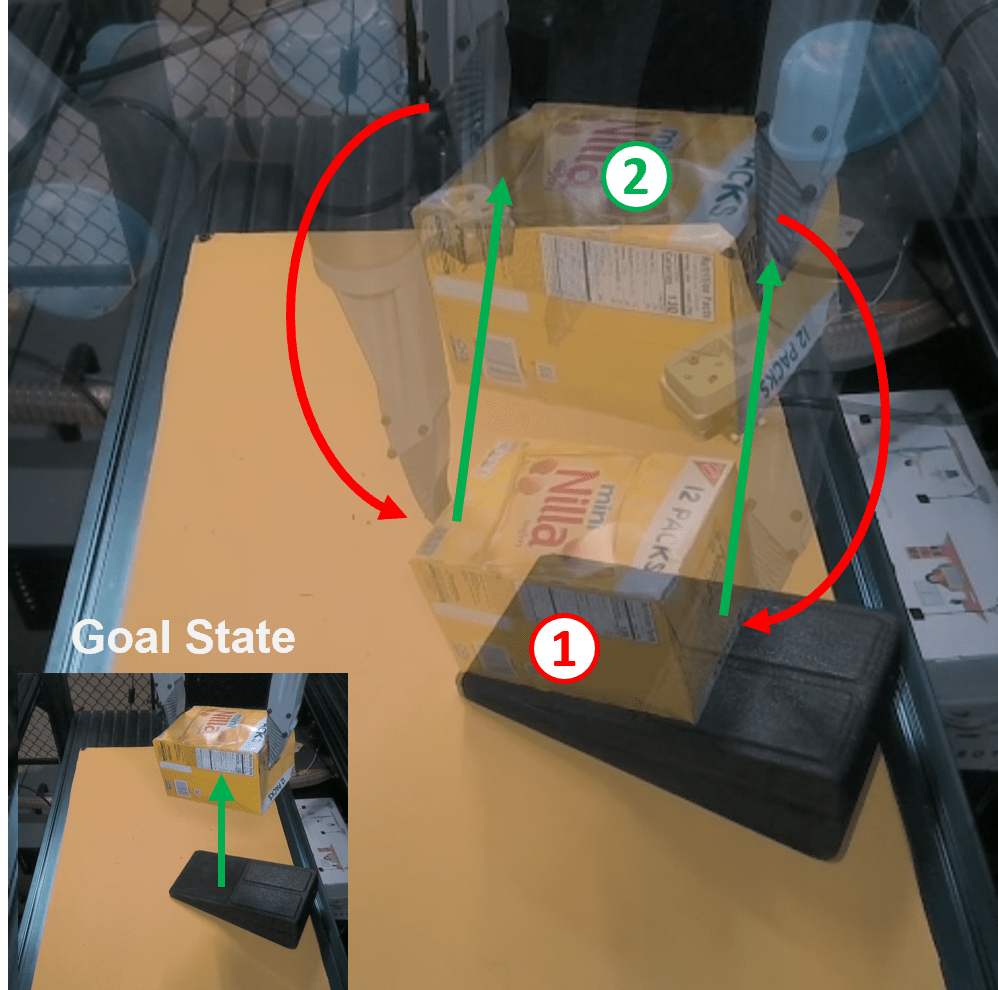}}\\
\vspace{10pt}
\subfloat[Bi-manual Flip\_Book]{\quad\includegraphics[width=5cm]{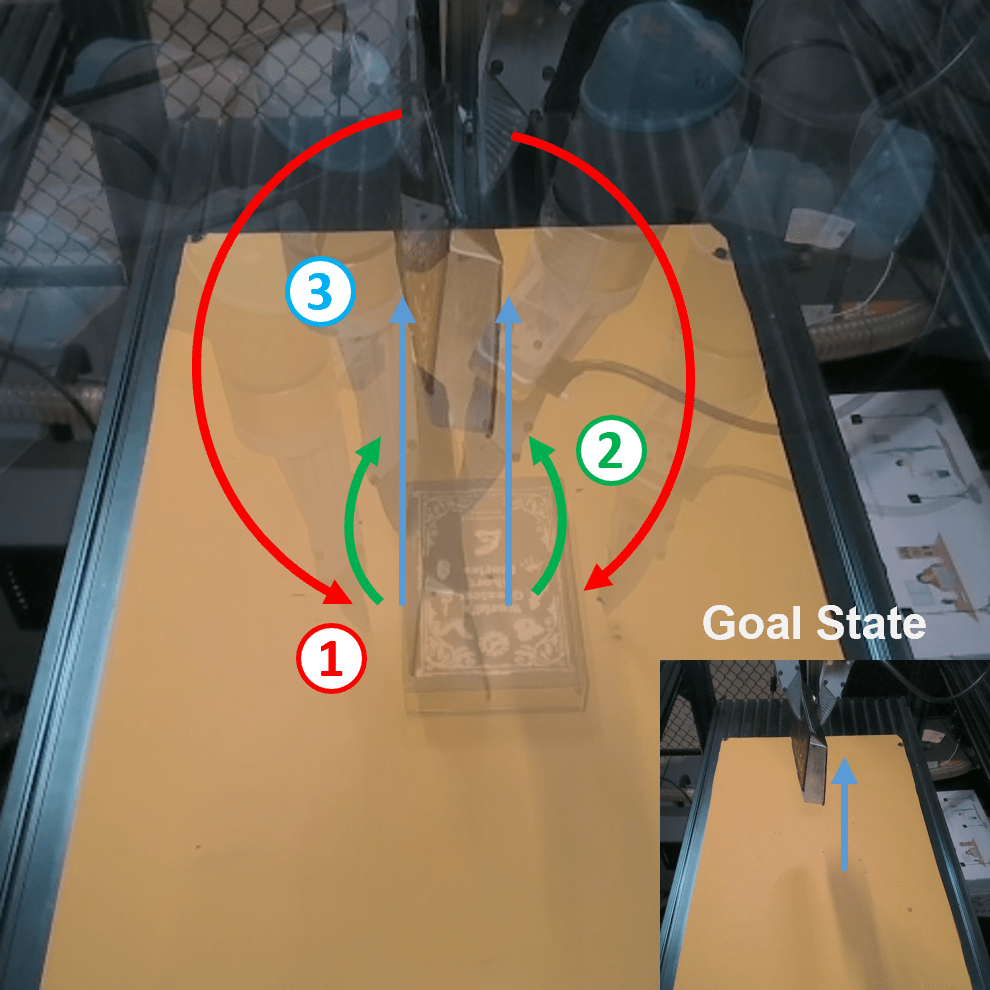}\quad}\qquad\;
\subfloat[Bi-manual Pack\_Package]{\includegraphics[width=10cm]{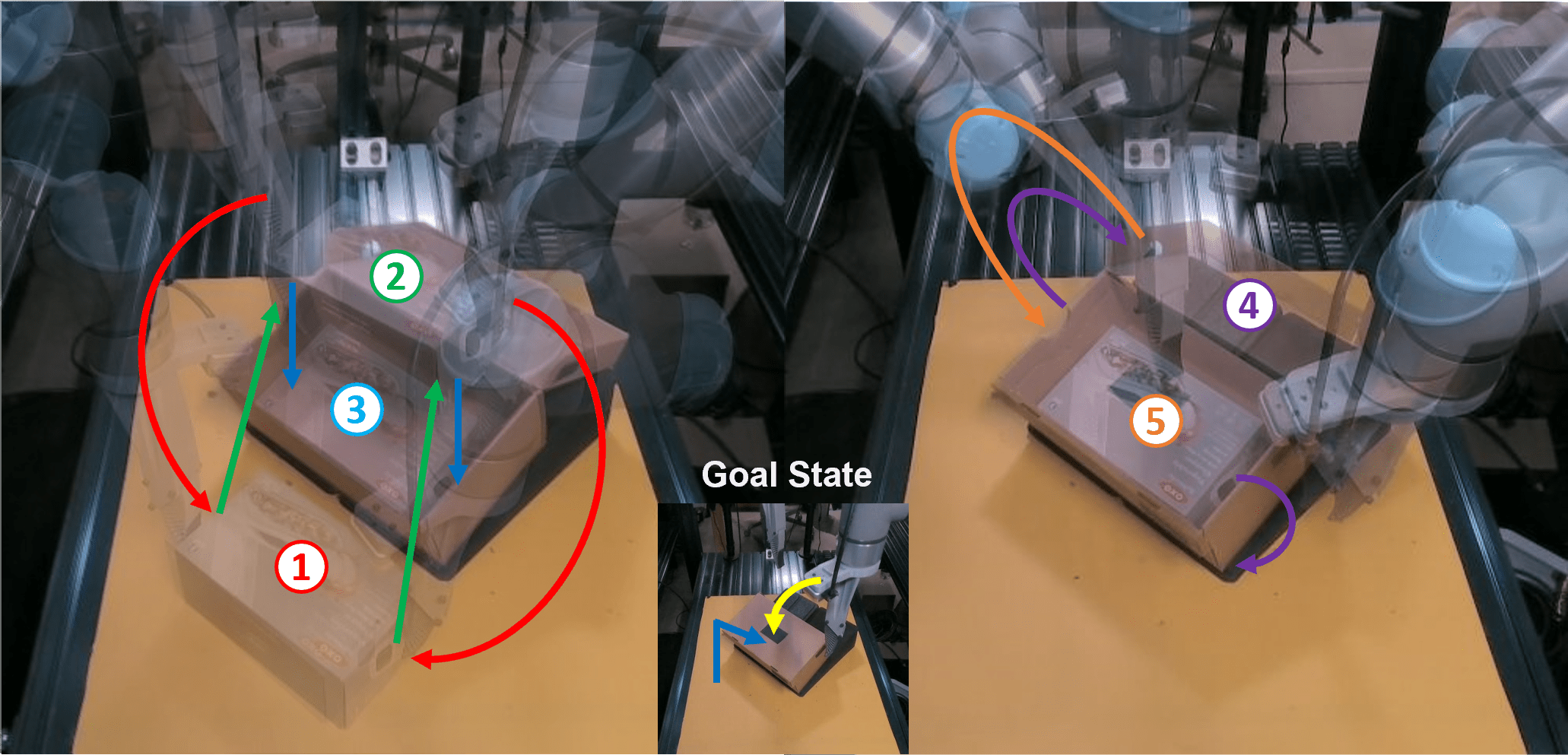}}\\
\caption{ Demonstrated robot actions for each task: a) Turn\_Lever, b) Push\_Eraser, and c) Grasp\_Box, each task with two trajectory segments and their respective goal states; d) Flip\_Book task with three trajectory segments and goal states, where both manipulators must perform coordinated movement after the initial pinch ; e) Pack\_Package with five trajectory segments  and goal states.}
\label{fig:demonstrated_actions}
\end{figure}

 \textbf{Turn\_Lever}: An expert demonstrator moves one ray-fin finger to make flush contact with the edge of the lever and then turn the Lever counter-clockwise at least  60 degree around the fulcrum.  Otherwise the task has failed. The lever, initialized with a $\SE(3)$ pose, is flexibly positioned with a clamp, using a combination of pitch and yaw angles within the 3D workspace of the manipulator.  A total of 33 human demonstrations have been collected, composed of 30 successful demo and  3 (10\%) recovery demonstrations where failed states were corrected to reach to the successful goal state.

 \textbf{Push\_Eraser}: An expert demonstrator moves one ray-fin finger to make contact with the Eraser, which is initialized with an  $\SE(3)$ pose within the marked boundary on the whiteboard. Once the contact is secure, the  demonstrator will move the ray-fin finger to push the Eraser, in a straight line, towards the closest edge, until the eraser is outside the marked rectangle, achieving a successful goal state. Otherwise the task has failed. The whiteboard is positioned, flexibly with a clamp, with approximate pitch angles of -15, 0, 30, 45, 60, and 90 degrees and approximate yaw angles of -15, 0, and 15 degrees within the 3D workspace of the manipulator. In total, 33 human demonstrations were collected, consisting of 30 successful demonstrations and 3 (10\%) recovery demonstrations.


\begin{figure}[ht!]%
\centering
\subfloat[Single-arm Turn\_Lever]{\includegraphics[width=0.45\linewidth]{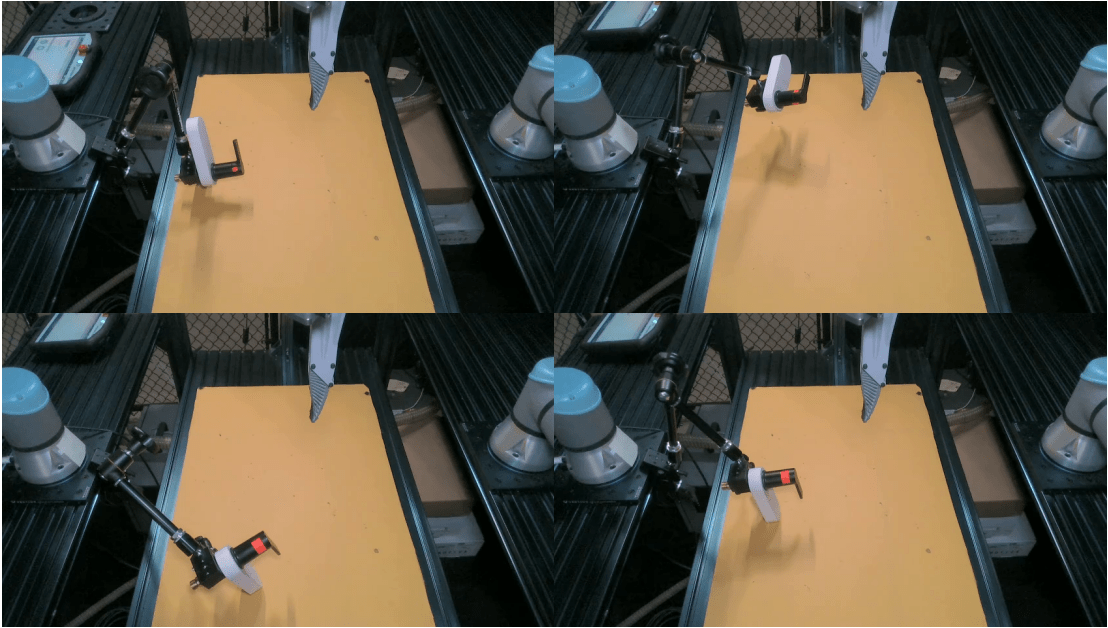}}\quad 
\subfloat[Single-arm Push\_Eraser]{\includegraphics[width=0.45\linewidth]{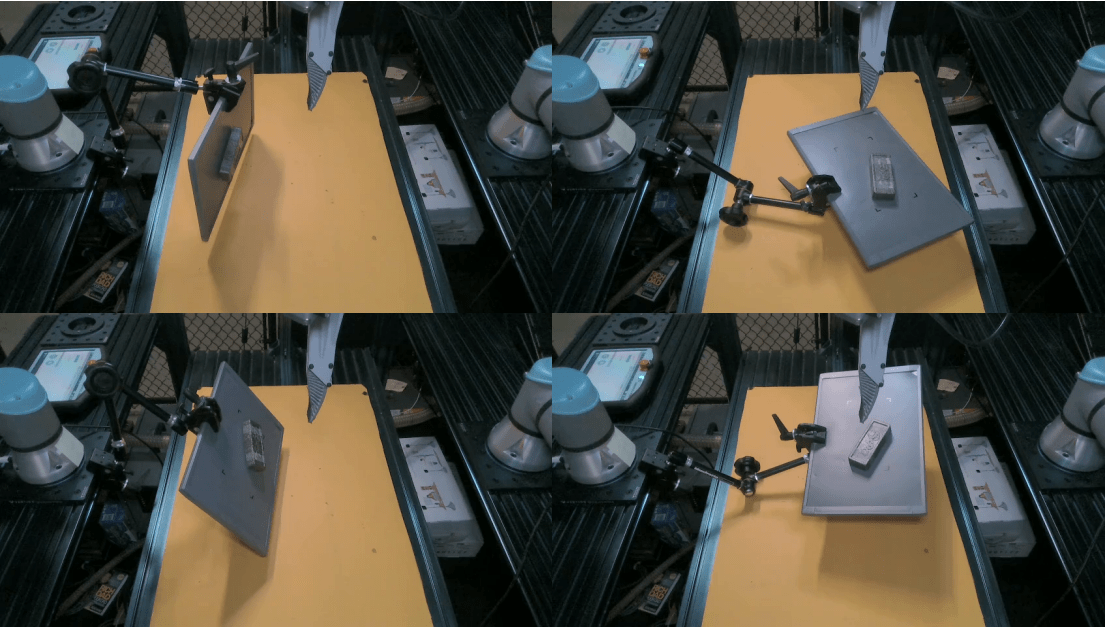}}\\
\vspace{10pt}
\subfloat[Bi-manual Grasp\_Box]{\includegraphics[width=0.45\linewidth]{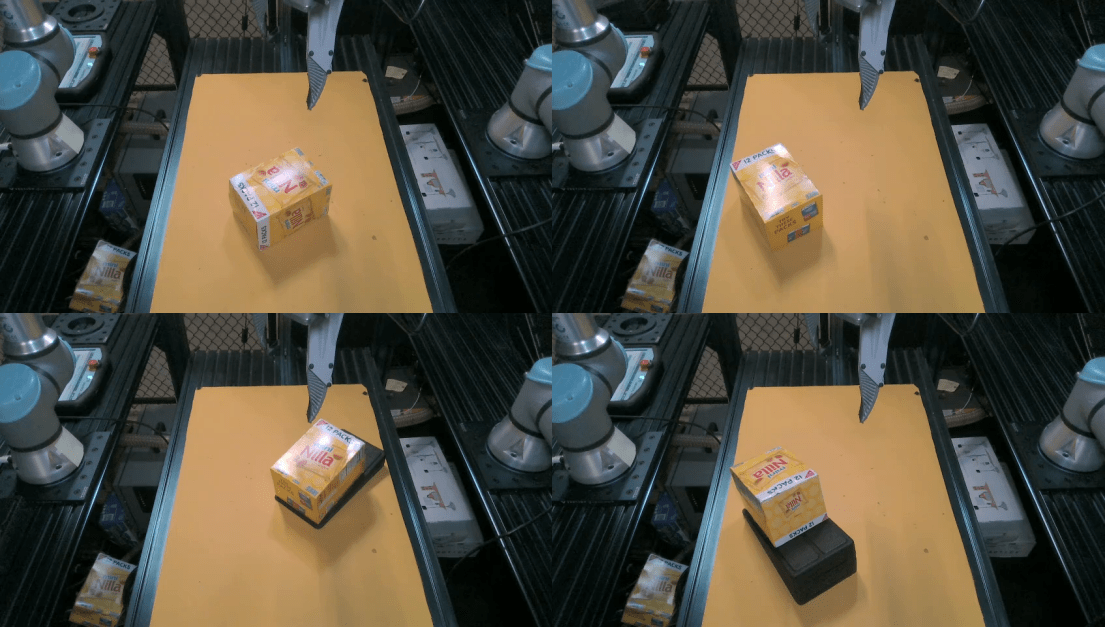}}\quad 
\subfloat[Bi-manual Flip\_Book]{\includegraphics[width=0.45\linewidth]{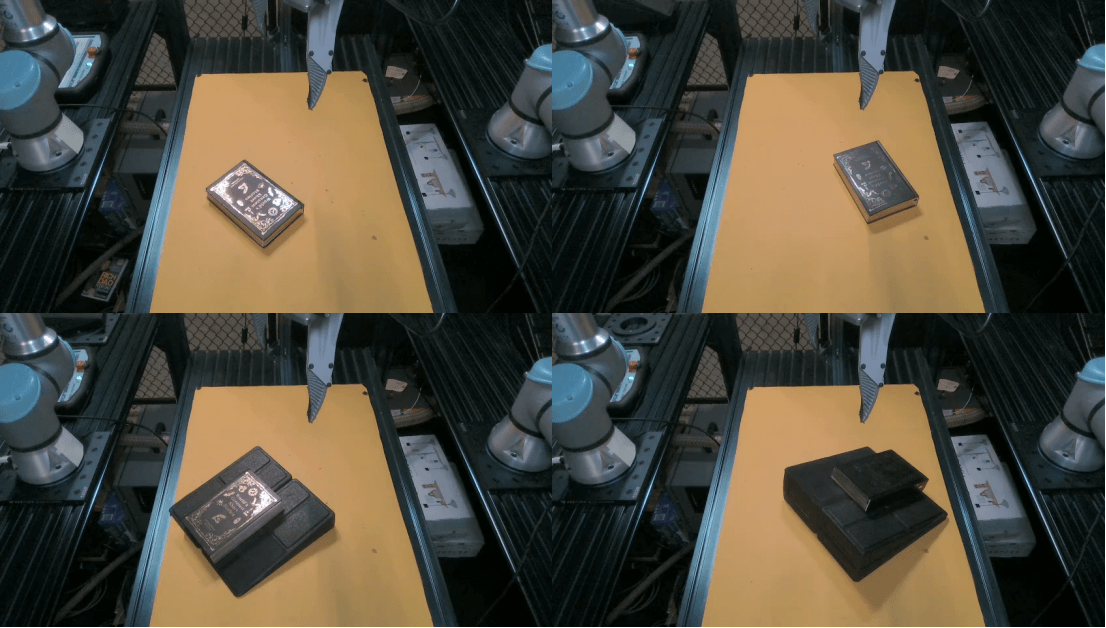}}\\
\vspace{10pt}
\subfloat[Bi-manual Pack\_Package]{\includegraphics[width=0.45\linewidth]{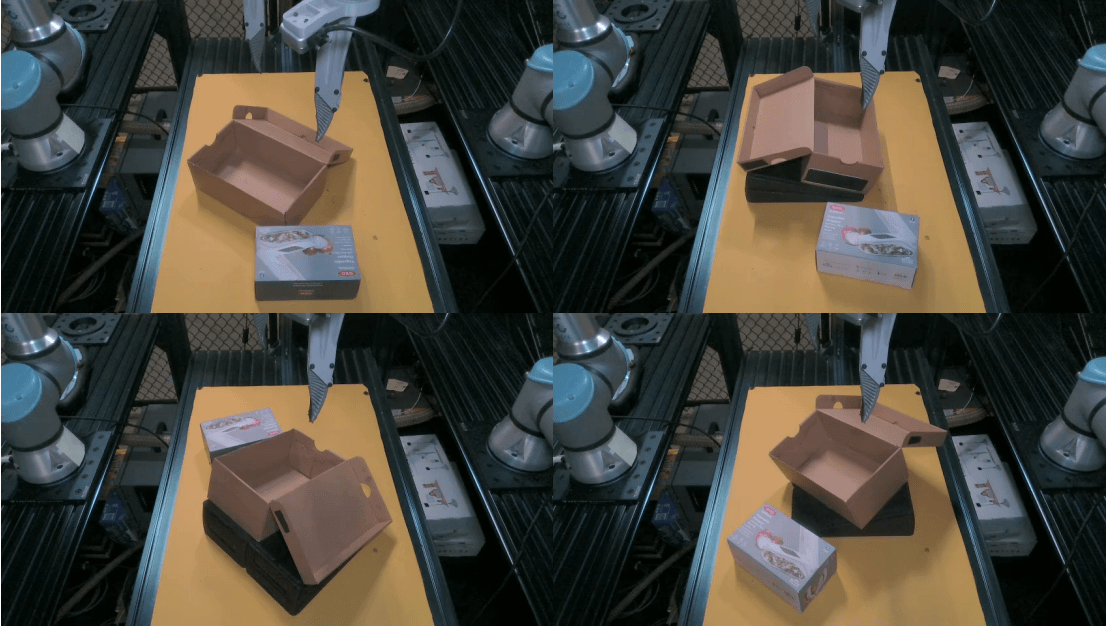}} 
\caption{Visualization of SE(3) Pose Distribution for Five Physical Tasks. The initial state of 4 out of 20 episodes are visualized.}
\label{fig:SE3PoseDistribution}
\end{figure}

 For the bi-manual manipulation tasks, we design a set of 432 distinct $\SE(3)$ poses, consisting of 9 regions on the x-y plane, 3 discrete pitch angles (0°, 8°, or 16°) provided by an 8° wedge, and 16 discrete yaw angles with 15° or 30° increments. We randomly sample an $\SE(3)$ pose from this set to position a box, book, or container for collecting human demonstrations.
 
 \textbf{Grasp\_Box}: an expert demonstrator moves two ray-fin fingers to pinch grasp the box at a sampled $\SE(3)$ pose, then lifts the pinched box minimally 40cm above the flat  surface. Otherwise the task has failed. 
 Similarly, we collect 33 human demonstrations including 3 recovery demos. 

  \textbf{Flip\_Book}: an expert demonstrator moves two ray-fin fingers to pinch grasp the book along its medium dimension at a sampled $\SE(3)$ pose, then rotates the pinched book in-hand so that the book is pinched along its smallest dimension, and then lifts the book minimally 40cm above the flat  surface. Otherwise the task has failed. Due to the precision required for coordinated finger movements, we collect 66 human demonstrations  including 6 recovery demos.

  \textbf{Pack\_Package}: An expert demonstrator moves two ray-fin fingers to pinch-grasp the box, transports it to the pre-pack pose above the container, places the box inside, moves to the pre-lid-close pose, and finally closes the lid. The final goal state of the task is the box inside the container with the lid closed. If any step fails, the task is considered a failure. Due to the precision required for coordinated finger movements, we collect 66 human demonstrations, including 6 recovery demonstrations.

\subsection{Detailed Evaluation Results }\label{sec:physical_results_step}

We evaluate the baseline performance with 20 rollouts, using object poses randomly sampled from the $\SE(3)$ pose set in training. All poses are annotated, and evaluation is conducted on novel, unseen poses.

\begin{table*}[ht]
\setlength\tabcolsep{3.2pt}

\caption{Breakdown of success rates at each step for five physical experiments over 20 evaluation episodes. The action space and number of training demonstrations are same as in Table \ref{tab:physcial_results}.
}

\vskip 0.15in
\begin{center}
\begin{tabular}{@{}lcccccccccccccccc@{}}
\toprule
& \multicolumn{2}{c}{Turn Lever} & \multicolumn{2}{c}{Push Eraser} & \multicolumn{2}{c}{Grasp Box} & \multicolumn{3}{c}{Flip Book} & \multicolumn{5}{c}{Pack Box}\\
\cmidrule(lr){2-3} \cmidrule(lr){4-5} \cmidrule(lr){6-7} \cmidrule(lr){8-10} \cmidrule(lr){11-15}  
Method & Contact & Rotate & Contact & Push & Grasp & Lift & Grasp & Flip & Lift & Pick & Transport & Pack & Locate & Close\\
\midrule
SDP &
90  & 80 & 
90  & 90 & 
90  & 85 & 
100  & 80 & 65 &
95  & 85 & 75 & 70 & 70\\

EDP & 
35 & 20 &  
40  & 30 & 
45  & 35 & 
0  & 0 & 0 &
45 & 35 & 10 & 0 & 0\\

DP & 
30  & 10 & 
15  & 10 & 
30  & 15 & 
35  & 0 & 0 &
70  & 35 & 5 & 5 & 0\\

\bottomrule
\end{tabular}
\end{center}
\label{tab:physcial_results_step}
\end{table*}





For all five physical tasks, we report the success rate for each intermediate goal state in Table \ref{tab:physcial_results_step}.  We compare SDP with EquiDiff \cite{wangequivariant} and DiffPo \cite{chi2023DP}. SDP achieves a strong performance with a minimum 90\% success rate on the first step, where both EquiDiff and DiffPo perform poorly. For subsequent steps, the success rate drops by up to 20\% in the Flip\_Book task, where precise coordination of both fingers is required for the in-hand pose rotation.

\begin{figure}[ht!]%
\centering
\subfloat[Single-arm Turn\_Lever]{\includegraphics[width=8cm]{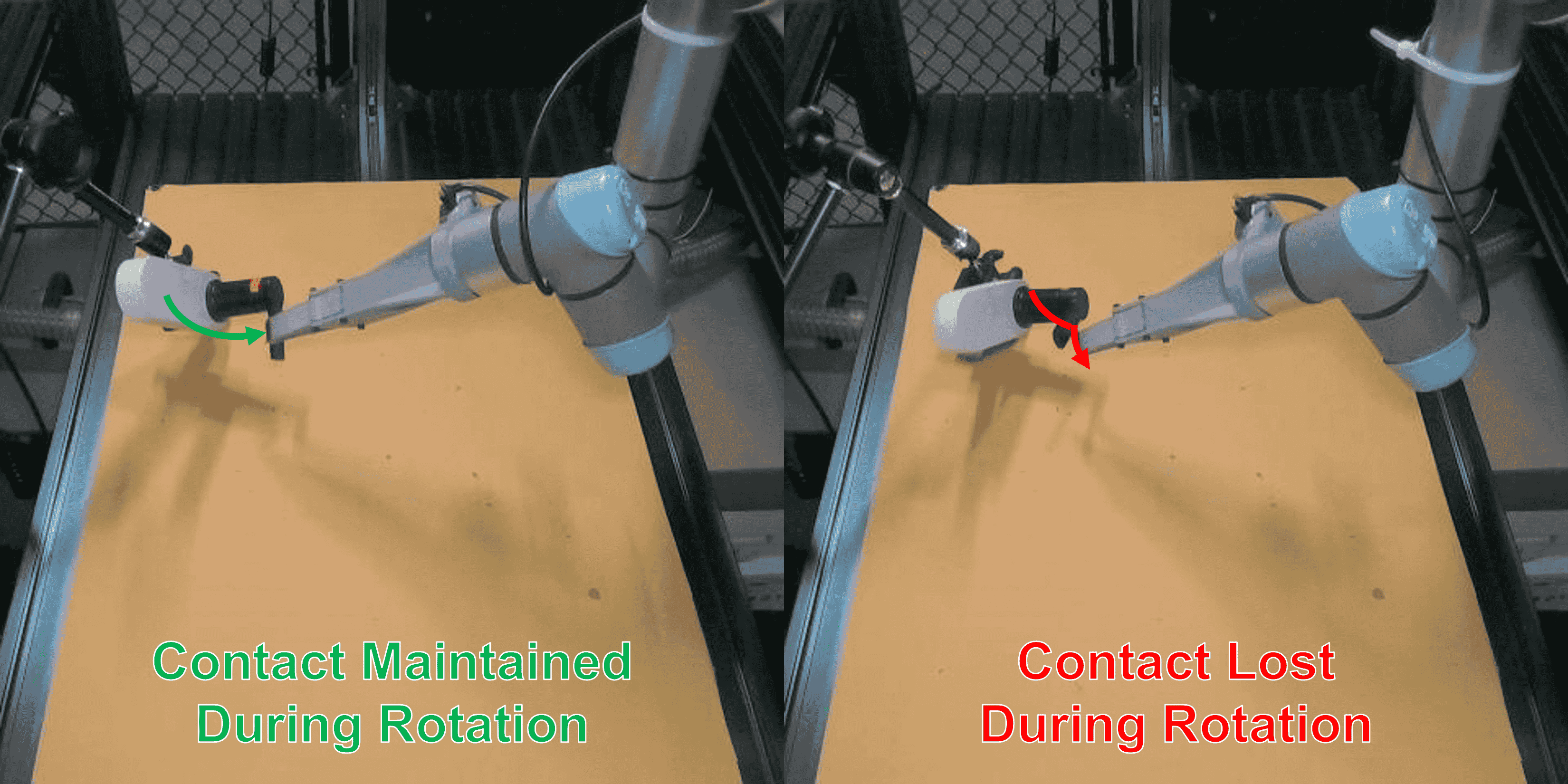}}\quad 
\subfloat[Single-arm Push\_Eraser]{\includegraphics[width=8cm]{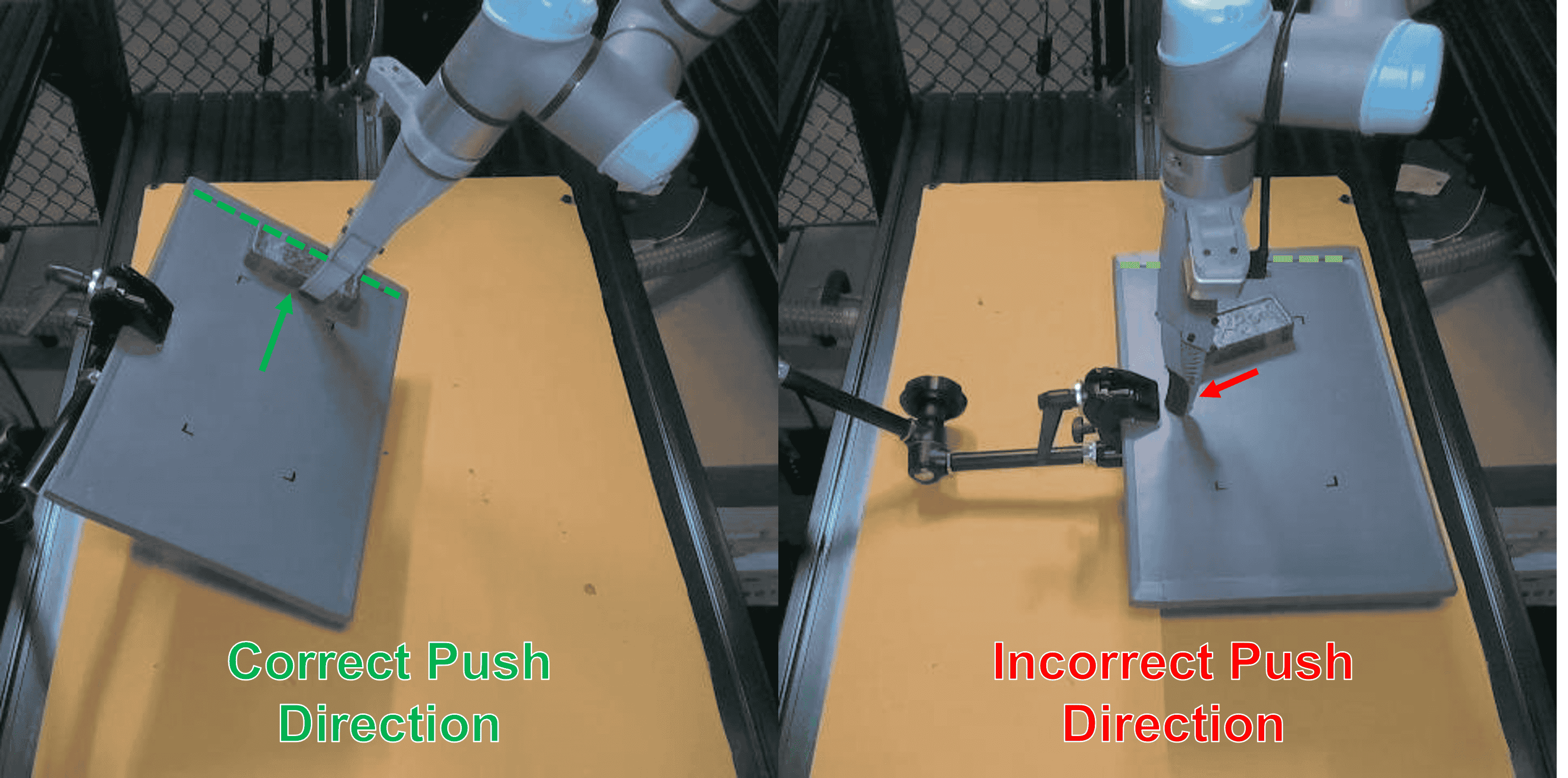}}\\
\vspace{10pt}
\subfloat[Bi-manual Grasp\_Box]{\includegraphics[width=8cm]{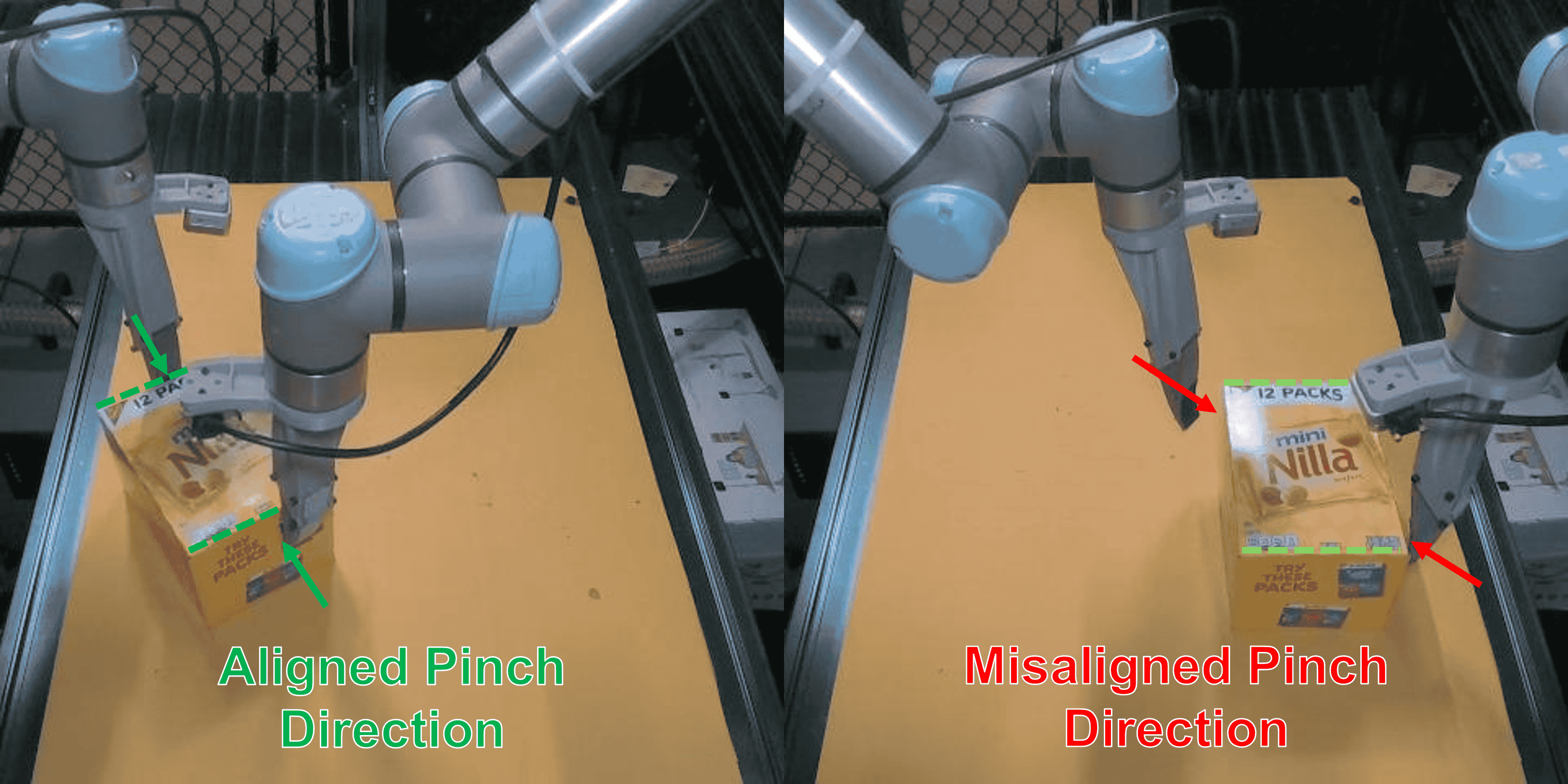}}\quad 
\subfloat[Bi-manual Flip\_Book]{\includegraphics[width=8cm]{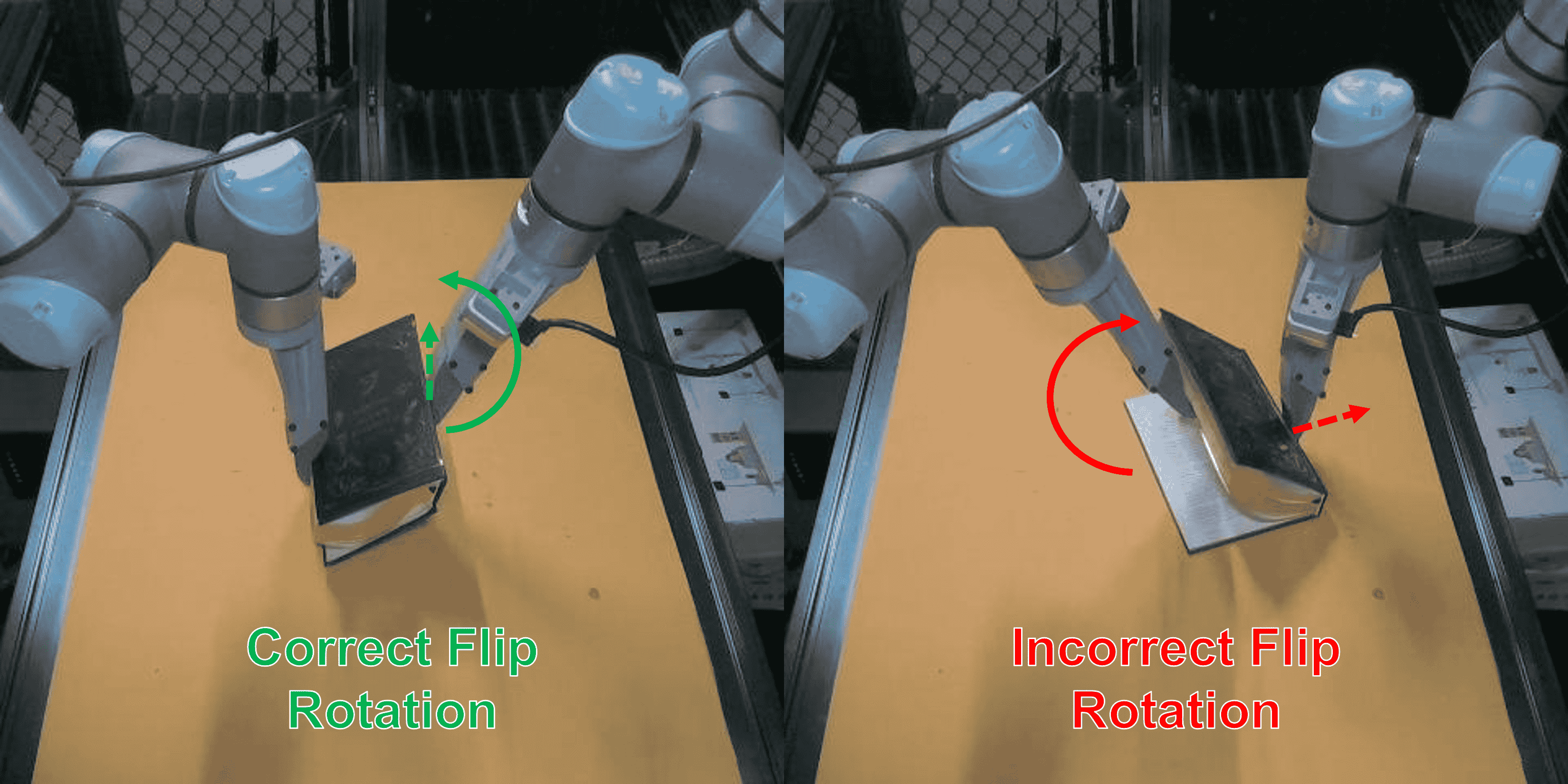}}\\ 
\vspace{10pt}
\subfloat[Bi-manual Pack\_Package]{\includegraphics[width=8cm]{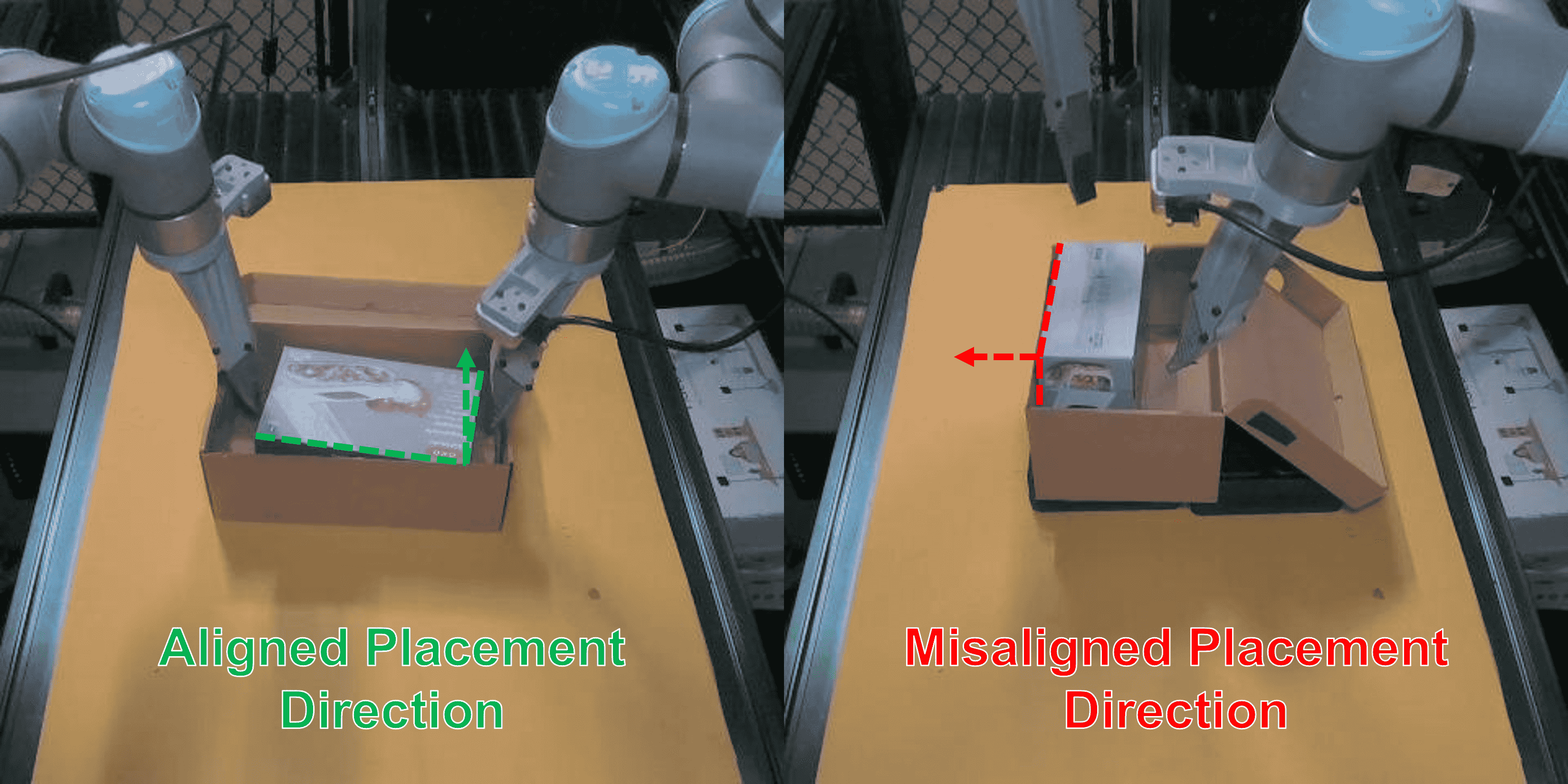}}
\caption{ Examples of successes and failures for each task, with green indicating successful behaviors and red indicating failures.}
\label{fig:rollout_results}
\end{figure}

Figure \ref{fig:rollout_results} provides examples of task successes and failures. The most common failure cases for SDP occur during the book flip step. Lack of precise coordination between the two fingers may cause the book to either drop or be pinched too tightly, leading to a robot fault. Other failures include invalid pinching during the pick step or object drops due to a loose grip. For the Pack\_Box task, collisions with the container may occur during the transfer of the pinched box, and misalignment or incorrect placement can lead to collisions during the packing step. Interestingly, when positioning the lid to close, the robot may mistakenly identify the object as the lid.
For the Turn\_Lever task, the finger may drift away from the lever while attempting to complete the required rotation.  For the Push\_Eraser task, the robot pushes in the wrong direction, failing to push the eraser across the boundary.  For the Grasp\_Box task, SDP struggles to pinch the box when its long dimension is parallel to the robot’s front (i.e., when the box has a yaw angle of 0 or 180 degrees).

\section{Additional Background}

\subsection{Equivariant Diffusion}
\label{app:equ_diff}

The theory of Equivariant Diffusion has been extensively investigated in \cite{kohler2020equivariant, brehmer2024edgi, ryu2024diffusion, wangequivariant}. Based on these works, we summarize the equivariance property of the policy and the denoising function of the policy for completeness. There are two scenarios: diffuse equivariance and denoise equivariance.


\begin{proposition}[Diffuse equivariance] If the policy is equivariant to group $G$, i.e., $\pi(gS) = g\pi(S)$, and the distribution $\mathcal{D}$ from which to sample the noise, is invariant to group $G$, i.e., $\mathcal{D}=g\mathcal{D}$, and the the denoising function satisfies
\begin{equation}\label{eqn:denoise}
\epsilon_\theta(S, \pi(S) +\epsilon, k) = \epsilon, \quad \epsilon \sim \mathcal{D}, 
\end{equation}
then the denoising function is equivariant to group $G$, i.e., $\epsilon(gS, gA^k, k) = g\epsilon(S, A^k, k)$,  \label{pro:train_equ}
\end{proposition}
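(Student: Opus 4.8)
The plan is to derive the equivariance of $\epsilon_\theta$ directly from the two hypotheses --- equivariance of the policy and $G$-invariance of the noise distribution --- together with the defining identity \eqref{eqn:denoise}. First I would take an arbitrary state $S$, an arbitrary noise sample $\epsilon \sim \mathcal{D}$, and an arbitrary group element $g \in G$, and write down the noised action along the two relevant trajectories: the ``untransformed'' one is $A^k = \pi(S) + \epsilon$, and I want to understand $\epsilon_\theta(gS, \cdot, k)$ evaluated at the image of this point under the group action.

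The key computation is to note that, because $\rho$ is a group representation (hence linear) and the policy is equivariant, $g \cdot A^k = g\pi(S) + g\epsilon = \pi(gS) + g\epsilon$. So the transformed noisy action $g A^k$ is exactly the optimal action $\pi(gS)$ at the transformed state $gS$, corrupted by the noise vector $g\epsilon$. Now invoke \eqref{eqn:denoise}, but applied at the state $gS$ with noise sample $g\epsilon$: this is legitimate precisely because $\mathcal{D} = g\mathcal{D}$, so $g\epsilon$ is itself a valid sample from $\mathcal{D}$ and the hypothesis \eqref{eqn:denoise} holds for it. This yields $\epsilon_\theta\bigl(gS,\ \pi(gS) + g\epsilon,\ k\bigr) = g\epsilon$. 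Substituting $\pi(gS) + g\epsilon = g(\pi(S)+\epsilon) = gA^k$ on the left and $g\epsilon = g\,\epsilon_\theta(S, \pi(S)+\epsilon, k) = g\,\epsilon_\theta(S, A^k, k)$ on the right (the last equality again by \eqref{eqn:denoise}) gives exactly $\epsilon_\theta(gS, gA^k, k) = g\,\epsilon_\theta(S, A^k, k)$, which is the claim.

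The main subtlety --- and the step I would be most careful about --- is the use of the invariance of $\mathcal{D}$: one must make sure \eqref{eqn:denoise} is being read as a statement that holds for \emph{every} state and \emph{every} noise sample in the support of $\mathcal{D}$, so that re-instantiating it at $(gS, g\epsilon)$ is valid, and this is where the hypothesis $\mathcal{D} = g\mathcal{D}$ does its work (guaranteeing $g\epsilon$ lies in the support). A secondary point to state cleanly is that the identity is naturally established for noisy actions of the special form $A^k = \pi(S) + \epsilon$; to read it as the formal equivariance $\epsilon_\theta(gS, gA^k, k) = g\epsilon_\theta(S, A^k, k)$ for the $A^k$ arising in the denoising iteration \eqref{equ:dp}, one relies on the fact that these iterates are (approximately) of that form along the reverse diffusion, matching the framing in \cite{ryu2024diffusion, wangequivariant}. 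I would mention this briefly but not belabor it, since the proposition as stated is essentially the algebraic identity above.
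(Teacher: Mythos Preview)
Your proposal is correct and follows essentially the same route as the paper's proof: instantiate the defining identity \eqref{eqn:denoise} at $(gS, g\epsilon)$ using the $G$-invariance of $\mathcal{D}$, then rewrite $\pi(gS)=g\pi(S)$ and substitute back via \eqref{eqn:denoise} to obtain the equivariance. Your write-up is in fact a bit more explicit than the paper's about the role of linearity of the group action and about the scope of $A^k$ for which the identity is established, which is a welcome clarification.
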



\begin{proof}
We assume the denoising function satisfies Equation \ref{eqn:denoise}. Since the equation holds for all $\epsilon \sim \mathcal{D}$ and $\mathcal{D}$ is $G$-invariant, we can evaluate at $gS$ and $g\epsilon$,
\begin{equation*}
 \epsilon_\theta(gS, \pi(gS) +g\epsilon, k) = g\epsilon.    
\end{equation*}
Using the equivariance of $\pi$ and substituting in Eqn.\ref{eqn:denoise} for $\epsilon$ gives
\[
\epsilon_\theta(gS, g\pi(S) +g\epsilon, k) = g \epsilon_\theta(S, \pi(S)+\epsilon, k)
\]
as desired.
\end{proof}

\begin{proposition}
[Denoise equivariance] The action prediction is equivariant to group $G$, i.e., $\pi(gS) = g\pi(S)$, when the denoising function is equivariant to group $G$, i.e., $\epsilon(gS, gA^k, k) = g\epsilon(S, A^k, k)$, and the distribution $\mathcal{D}$ from which to sample the noise, is invariant to group $G$, i.e., $\mathcal{D}=g\mathcal{D}$. \label{pro:test_equ}
\end{proposition}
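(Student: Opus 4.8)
The plan is to prove the converse direction of Proposition~\ref{pro:train_equ}: starting from an equivariant denoising function, show that the iterated denoising map defines an equivariant policy. The key structural fact is that the policy $\pi(S) = A_t^0$ is obtained by running the recursion in Equation~\ref{equ:dp}, namely $A_t^{k-1} = \alpha(A_t^k - \gamma\,\epsilon_\theta(S_t, A_t^k, k) + z)$ with $z \sim \mathcal{N}(0,\sigma^2 I)$, initialized at $A_t^K \sim \mathcal{N}(0,I)$. So the natural approach is induction on the denoising step, tracking how a group element $g$ acting on the state commutes through one step of the recursion, and then invoking $G$-invariance of the noise distributions to handle the stochastic terms in distribution.

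First I would set up the base case: the initial iterate $A_t^K$ is drawn from $\mathcal{N}(0,I)$, which is $G$-invariant by hypothesis, so the distribution of $A_t^K$ under state $S$ equals the distribution of $g^{-1}A_t^K$ under state $gS$ — equivalently, $A_t^K$ for state $gS$ has the same law as $gA_t^K$ for state $S$. Then for the inductive step, assume the law of $A_t^k$ (as a function of the state argument) satisfies this equivariance-in-distribution property. Applying the recursion at state $gS$ with iterate $gA_t^k$ and fresh noise $z' $, I would use the equivariance of $\epsilon_\theta$, i.e. $\epsilon_\theta(gS, gA_t^k, k) = g\,\epsilon_\theta(S, A_t^k, k)$, together with linearity of the update (scaling by $\alpha$, $\gamma$ and addition all commute with the linear group action $\rho(g)$ since the representation acts by matrices), to get $\alpha(gA_t^k - \gamma\, g\epsilon_\theta(S,A_t^k,k) + z') = g\,\alpha(A_t^k - \gamma\epsilon_\theta(S,A_t^k,k) + g^{-1}z')$. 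Since $z'$ and $g^{-1}z'$ have the same law (the Gaussian $\mathcal{N}(0,\sigma^2 I)$ is $G$-invariant for orthogonal/unitary $\rho(g)$, which holds here because Wigner D-matrices are orthogonal), the resulting iterate $A_t^{k-1}$ at state $gS$ has the same law as $g$ times the iterate $A_t^{k-1}$ at state $S$. Iterating down to $k=0$ gives that $\pi(gS)$ and $g\pi(S)$ are equal in distribution, which is the claimed equivariance (if $\pi$ is taken to be a deterministic map, e.g. via DDIM with fixed noise, the same argument gives exact equality by the same commutation, with no appeal to distributional invariance beyond the initialization).

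The main obstacle, and the point I would be most careful about, is the precise sense in which the statement is true: because the reverse process is stochastic, $\pi(gS) = g\pi(S)$ can only hold as an equality of distributions (or after coupling the noise), not pointwise, so I would state the conclusion as distributional equivariance and note the deterministic-solver special case. A secondary technical point is ensuring the group action $\rho(g)$ used on actions is by orthogonal matrices so that it preserves the isotropic Gaussian noise — this is exactly where the orthogonality of the Wigner D-matrices (already noted in the Spherical Harmonics background) is used, and I would cite that. The rest is a routine induction, so I would keep it brief, emphasizing the commutation identity and the noise-invariance step as the two load-bearing observations.
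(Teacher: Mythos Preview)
Your proposal is correct and follows essentially the same route as the paper: induction on the denoising step, using the equivariance of $\epsilon_\theta$ together with linearity of the update to commute $g$ through one reverse step, and invoking $G$-invariance of the noise distribution to absorb the $g^{-1}z'$ term. Your added care in stating the conclusion as equality in distribution (and noting the deterministic-solver special case), and in explicitly citing orthogonality of the Wigner D-matrices so that the isotropic Gaussian is preserved, are clarifications the paper's own proof leaves implicit.
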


\begin{proof}
    
Simplifying Equation. \ref{equ:dp} by dropping $t$ we have:

\begin{equation}
    A^{k-1} = \alpha\big(A^k - \gamma\epsilon_\theta(S, A^k, k) + z\big), z\sim\mathcal{D} \label{equ:sim_dp}
\end{equation}

When $k=K$, the denoise is sampled form $\mathcal{D}$: $\epsilon_\theta(S, A, K) = d_1, d_1 \sim \mathcal{D}$, and the denoised action $A^{K-1}$ is:
\begin{align}
    A^{K-1} &= \alpha(0 - \gamma d_1 + d_2), \quad d_1, d_2 \sim \mathcal{D} \\
    &= \alpha(d_2 - \gamma d_1) \\
    &= \alpha(gd_2' - \gamma gd_1'), \quad d_1', d_2' \sim \mathcal{D} \\
    &= g\alpha(d_2' - \gamma d_1') \\
    &= gA^{K-1}
\end{align}

When $k=K-1$, the denoise is applied to the noisy action, to generate the cleaner action. Transforming the input to the denoising function in Equation. \ref{equ:sim_dp} by $g$:
\begin{align}
    \alpha\big(A^{K-1}& - \gamma\epsilon_\theta(gS, gA^{K-1}, K-1) + z\big) \\
    &= \alpha\big(gA^{K-1} - g\gamma\epsilon_\theta(S, A^{K-1}, K-1) + gz\big) \\
    &= g\alpha\big(A^{K-1} - \gamma \epsilon_\theta(S, A^{K-1}, K-1) + z\big) \\
    &= gA^{K-2} \label{equ:equ_dp_K_1}
\end{align}

Equation. \ref{equ:equ_dp_K_1} holds for $k={K-1, K-2, ..., 1}$, thus by applying it iteratively, we have $gA^0$.
\end{proof}

Proposition. \ref{pro:train_equ}, \ref{pro:test_equ} specify the prerequisites of an equivariant diffusion policy. Empirically we find that the group invariant distribution $\mathcal{D}$ can be relaxed to a Gaussian distribution $\mathcal{N}$, which is simple and achieves good performance.

\subsection{Translation Invariance by Canonicalization}
\label{app:trans_can}

Translation invariance is achieved using a relative action formulation \cite{chi2024universal} and state-action canonicalization \cite{zeng2022robotic, wang2021equivariant, zhu2022grasp, jia2023seil}. We summarize and proof this property. 


\begin{proposition}
    The relative state-action formulation is $\T(3)$ (translational) invariant.
\end{proposition}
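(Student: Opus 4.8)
The plan is to show that the relative state-action formulation produces identical inputs and outputs under any global translation, so the policy is trivially invariant. First I would recall the canonicalization maps from the method section: given raw observation $O$, gripper position $e_T^i$, rotation $e_R^i$, aperture $e_{grip}^i$, and action $(A_T^i, A_R^i, A_{grip}^i)$, the canonical state is $S^{can,i} = (O - e_T^i,\; e_T^i - e_T^i,\; e_R^i,\; e_{grip}^i)$ and the canonical action is $A^{can,i} = (A_T^i - e_T^i,\; A_R^i,\; A_{grip}^i)$.

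Next I would make the group action explicit. For $t \in \T(3)$, a translation acts on the raw scene by $O \mapsto O + t$ (every point shifted), $e_T^i \mapsto e_T^i + t$, $A_T^i \mapsto A_T^i + t$, while the orientation components $e_R^i$, $A_R^i$ and the scalars $e_{grip}^i$, $A_{grip}^i$ are unaffected since translations do not rotate frames or change apertures. The key step is then a direct substitution: computing the canonicalization of the translated data, $O + t - (e_T^i + t) = O - e_T^i$, and $A_T^i + t - (e_T^i + t) = A_T^i - e_T^i$, so every component of $S^{can,i}$ and $A^{can,i}$ is unchanged. Hence $S^{can,i}(tS) = S^{can,i}(S)$ and $A^{can,i}(tA) = A^{can,i}(A)$.

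Finally I would conclude: since the learned network $\epsilon_\theta$ (and thus the whole denoising rollout yielding $\pi$) sees only the canonical state $S^{can,i}$ and outputs predictions in the canonical action frame, which are then mapped back via $A_T^i = A_T^{can,i} + e_T^i$, the net effect is $\pi(tS) = \pi(S)$ as far as the rotation/aperture components go, and the position component transforms correctly back by $+t$ only through the de-canonicalization — but relative to the gripper it is identical, which is precisely the $\T(3)$-invariance claimed (translations act trivially on the canonical action). For the bi-manual case, $i \in \{0,1\}$ and the argument is applied componentwise to each arm's canonicalization, which changes nothing. I do not expect a serious obstacle here; the only subtlety is being careful about what "invariant" means for the position component — it is invariant in the canonical (gripper) frame, which is the frame in which the policy is defined — so I would state that precisely rather than claiming the absolute output position is unchanged.
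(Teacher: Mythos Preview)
Your proposal is correct and follows essentially the same approach as the paper: directly substitute the translated quantities $O+t$, $e_T^i+t$, $A_T^i+t$ into the canonicalization formulas and observe that the $+t$ terms cancel in every positional difference, leaving $S^{can}$ and $A^{can}$ unchanged. Your write-up is more careful than the paper's (explicitly noting that rotations and apertures are untouched by $\T(3)$, and handling the bi-manual case), but the core argument is identical.
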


\begin{proof} Translating both the state $S$ and the action $A$ by $g\in \T(3)$ we have:
\begin{align*}
gS^{can} &= \big((O+g)-(e_T+g), \\
&\quad \ \ \ (e_T+g)-(e_T+g), \\
&\qquad \ e_R, e_{grip}\big) \\
&= (O-e_T, e_T-e_T,e_R, e_{grip}) \\
&= S^{can} \\
gA^{can} &= \big((A_T+g) - (e_T+g), A_R, A_{grip}\big) \\
&= (A_T - e_T, A_R, A_{grip}) \\
&= A^{can}
\end{align*}
Therefore $\pi(S^{can}) = \pi(gS^{can}) = gA^{can} = A^{can}$.
\end{proof}
\section{Hyperparameters}
\label{app:hyperparameters}

Hyperparameters for diffusion-based baseline methods are listed in Table \ref{tab:hyperparameters}. SDP generally adopts Diffusion Policy’s hyperparameters, except for batch size, because SDP is heavier than other baselines.

\begin{table}[!ht]
\setlength\tabcolsep{5pt}
\vskip 0.15in
\begin{center}
\begin{tabular}{@{}lccccccc@{}}
\toprule
 & SDP & EquiDiff & EquiBot & DiffPo & DP3 & DP3 paper\\
\midrule
Batch Size & 32 & 64 & 64& 64 & 128 & 128 \\
Prediction Horizon  & 16 & 16 & 16 & 16 & 16 & 16 \\
Action Horizon & 8 & 8 &8 & 8 &8 & 8 \\
Learning Rate & 1e-4 & 1e-4 & 1e-4 & 1e-4 & 1e-4 & 1e-4\\
Epochs & 500 & 500 & 500 & 500 & 500 & 3000\\
Learning Rate Scheduler & cosine & cosine & cosine & cosine & cosine & cosine \\
Noise Scheduler & DDPM & DDPM & DDPM & DDPM & DDIM & DDIM \\
Diffusion Train/Test Step & 100 & 100  & 100  & 100 & 100/10  & 100/10 \\
Encoded Scene Dimension & 128  &  128  & 128  & 128 & 64  & 64 \\
\bottomrule
\end{tabular}
\end{center}
\setlength\tabcolsep{3.5pt}
\caption{Hyperparameters for baselines.}
\label{tab:hyperparameters}
\end{table}

\section{Architecture of the Point Cloud Encoder}
\label{app:encoder}

The point cloud encoder is a 5-layer ResNet \cite{he2016deep}, consisting of EquiformerV2 \cite{liao2024equiformerv2} graph convolution layers in the hidden layers and EquiformerV2 origin convolution layer in the last layer to aggregate all the features into a single point.

\begin{figure}[h]
    \centering
    \includegraphics[width=0.8\linewidth]{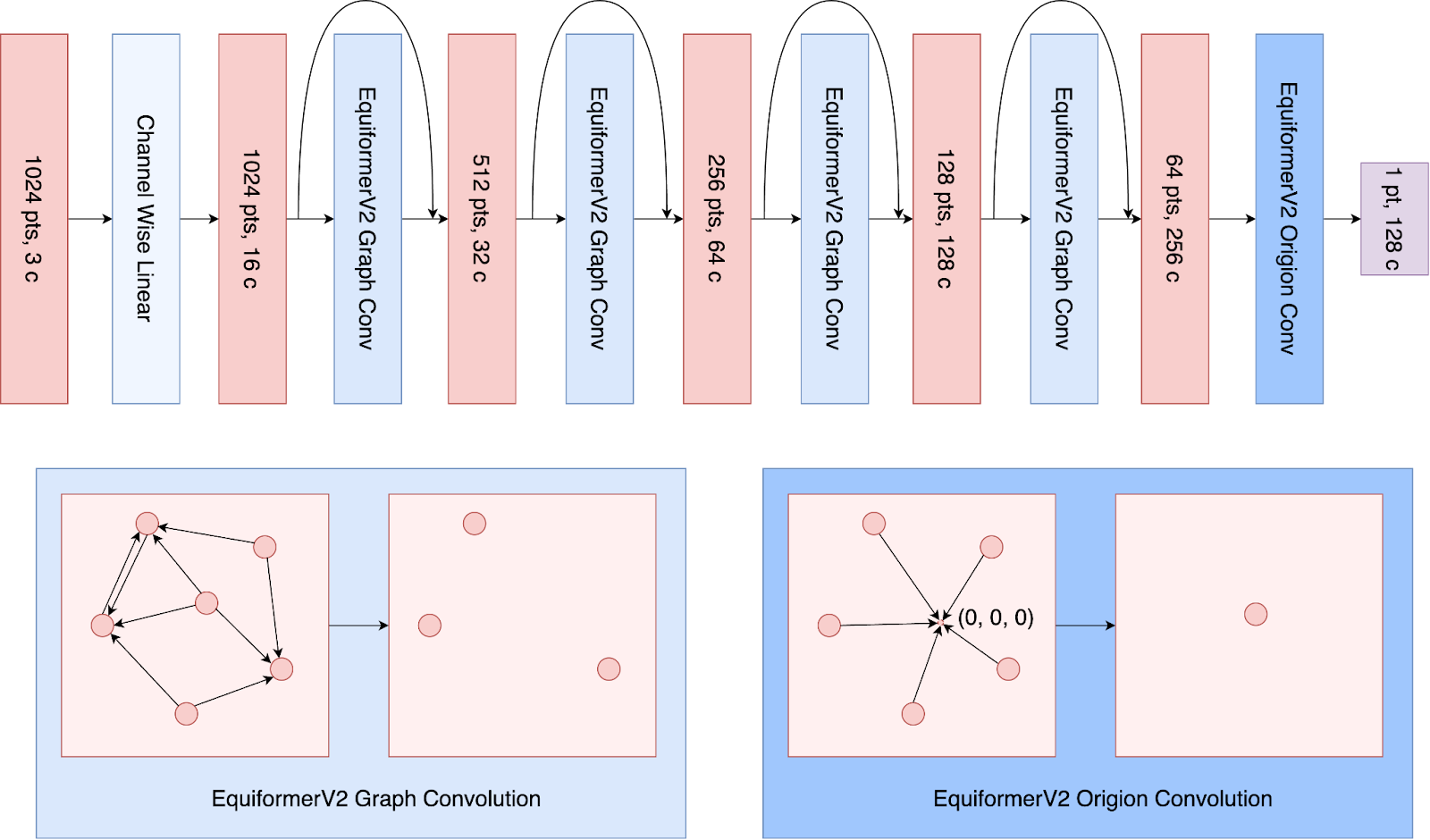}
    \caption{Overview of the Point Cloud Encoder. Top: the point cloud encoder. Bottom: the details of each block in the encoder. ``pts'' stands for the number of points and ``c'' stands for the number of channels.}
    \label{fig:enc}
\end{figure}



\end{document}